\documentclass[lettersize,journal]{IEEEtran}
\usepackage{amsmath,amsfonts}
\usepackage{algorithm}
\usepackage{array}
\usepackage[caption=false,font=normalsize,labelfont=sf,textfont=sf]{subfig}
\usepackage{textcomp}
\usepackage{stfloats}
\usepackage{url}
\usepackage{verbatim}
\usepackage{graphicx}
\usepackage{courier}
\usepackage{amsmath}
\usepackage{amssymb}
\usepackage{amsfonts}
\usepackage{graphicx}
\usepackage{amsthm} 
\newtheorem{remark}{Remark}
\usepackage{orcidlink}
\usepackage{array}
\usepackage{algorithm}
\usepackage{algpseudocode}
 \usepackage{booktabs}
 \definecolor{revisionblue}{RGB}{0,70,160}

 \usepackage{color}
\usepackage{hyperref}
\usepackage[table]{xcolor}  % 关键行
 \usepackage{multirow} 
\usepackage{cite}
\begin{document}

\title{GRACE: Adaptive Concept Erasure with Geometry-Guided Retention in Diffusion Models}

% \author{IEEE Publication Technology,~\IEEEmembership{Staff,~IEEE,}
%         % <-this % stops a space
% \thanks{This paper was produced by the IEEE Publication Technology Group. They are in Piscataway, NJ.}% <-this % stops a space
% \thanks{Manuscript received April 19, 2021; revised August 16, 2021.}}

% \author{Qinghui~Gong \orcidlink{0009-0008-5997-2572}
%         Yihuai~Liang\,\orcidlink{0000-0002-6254-9969},
%         ~\IEEEmembership{Member,~IEEE},
%         Hua~Meng\,\orcidlink{0000-0002-9570-6430},
%         and~Zhengchun~Zhou\,\orcidlink{0000-0002-0228-7119},
%         ~\IEEEmembership{Senior~Member,~IEEE}%
% \thanks{Qinghui Gong, Yihuai Liang, and Zhengchun Zhou are with the
% School of Information Science and Technology, Southwest Jiaotong University,
% Chengdu 611756, China
% (e-mail: gongqinghui@my.swjtu.edu.cn; liangyh@swjtu.edu.cn;
% zzc@swjtu.edu.cn).}%
% \thanks{Hua Meng is with the School of Mathematics,
% Southwest Jiaotong University, Chengdu 611756, China
% (e-mail: menghua@swjtu.edu.cn).}%
% \thanks{Corresponding author: Yihuai Liang.}%
% }

\author{Qinghui~Gong \orcidlink{0009-0008-5997-2572},
        Yihuai~Liang\,\orcidlink{0000-0002-6254-9969},
        ~\IEEEmembership{Member,~IEEE},
        Yuanlun~Xie,
        Deepak~Kumar~Jain,
        Vitomir~Štruc\,\orcidlink{},
        and~Zhengchun~Zhou\,\orcidlink{0000-0002-0228-7119},
        ~\IEEEmembership{Senior~Member,~IEEE}%
\thanks{Qinghui Gong and Yihuai Liang are with the
School of Information Science and Technology, Southwest Jiaotong University,
Chengdu 611756, China
(e-mail: gongqinghui@my.swjtu.edu.cn; liangyh@swjtu.edu.cn).}%
\thanks{Yuanlun Xie is with the School of Electronic Information and Electrical Engineering,
Chengdu University, Chengdu, China.}%
\thanks{Deepak Kumar Jain is with the Key Laboratory of Intelligent Control and Optimization
for Industrial Equipment of Ministry of Education, Dalian University of Technology,
Dalian, China.}%
\thanks{Vitomir Štruc is with the Faculty of Electrical Engineering,
University of Ljubljana, Ljubljana, Slovenia
(e-mail: vitomir.struc@fe.uni-lj.si).}%
\thanks{Zhengchun Zhou is with the
School of Information Science and Technology, Southwest Jiaotong University,
Chengdu 611756, China
(e-mail: zzc@swjtu.edu.cn).}%
\thanks{Corresponding author: Yihuai Liang.}%
}

% The paper headers
% \markboth{Journal of \LaTeX\ Class Files,~Vol.~14, No.~8, August~2021}%
% {Shell \MakeLowercase{\textit{et al.}}: A Sample Article Using IEEEtran.cls for IEEE Journals}

% \IEEEpubid{0000--0000/00\$00.00~\copyright~2021 IEEE}
% Remember, if you use this you must call \IEEEpubidadjcol in the second
% column for its text to clear the IEEEpubid mark.

\maketitle

\begin{abstract}
Text-to-image (T2I) diffusion models inevitably internalize sensitive or non-compliant concepts from large-scale pretraining data, necessitating post-hoc concept erasure. However, existing erasure methods often lack explicit constraints on parameter updates, leading to over-intervention and unintended semantic drift. In addition, many methods rely on manually crafted counterfactual supervision, such as surrogate prompts, which incurs substantial data construction costs that limit scalability to new concepts. To address these limitations, we propose GRACE, a structured concept erasure framework designed to enable localized and selective intervention. Specifically, we introduce a semantically weighted sensitive subspace estimation to precisely lock intervention directions, and employ lightweight subspace-constrained adapters to prevent global semantic disturbance. To eliminate the dependency on manual prompt engineering, we design an automatically decoupled safe-anchor mechanism. To mitigate semantic drift induced by excessive intervention, we introduce an energy-driven dynamic gating mechanism that adaptively controls the timing and strength of intervention at inference. Extensive experiments demonstrate that our method achieves a superior balance between erasure effectiveness and generation fidelity. Compared with the average performance of five state-of-the-art (SOTA) concept erasure methods, our method improves the fine-grained NSFW reduction rate by $17.86\%$, while reducing the macro-averaged target CLIP Score and preservation-oriented Fr\'echet Inception Distance (FID) by $4.75\%$ and $50.58\%$, respectively, indicating stronger concept suppression with substantially improved preservation of the original model's generative utility.
\end{abstract}

\begin{IEEEkeywords}
Concept erasure, diffusion models, controllable generation, text-to-image generation.
\end{IEEEkeywords}

\section{Introduction}
\IEEEPARstart{I}{n} recent years, text-to-image (T2I) diffusion models have been widely deployed, driven by large-scale pretraining on web-scale data~\cite{nips/DhariwalN21}. 
The open-ended prompting paradigm of T2I systems allows users to steer generation through arbitrarily composed textual instructions. However, this flexibility also lowers the barrier to producing content that may be legally or ethically non-compliant~\cite{CarliniHNJSTBIW23, schramowski2023safe}. Due to weak data curation and broad corpus coverage~\cite{schuhmann2022laion}, these models inevitably internalize sensitive concepts, ranging from copyrighted characters and artistic styles~\cite{iclr/YanLCHJ0X25} to Not-Safe-For-Work (NSFW) content~\cite{cvpr/Yang0WHX024}, leading to persistent compliance risks in real-world deployments.

\begin{figure}[t]
  \centering  
  \includegraphics[width=0.95\linewidth]{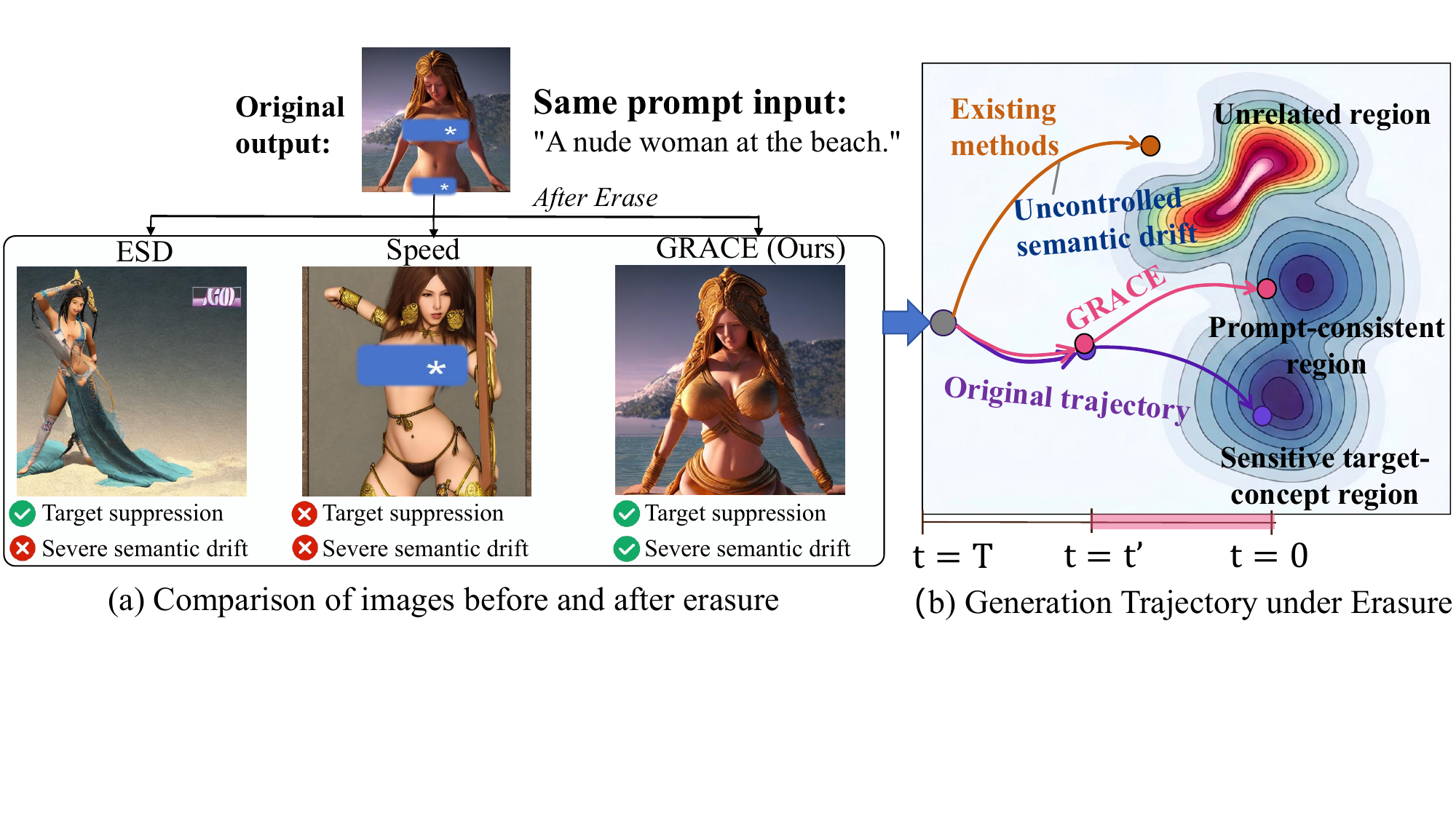}
  \caption{
Motivation for controlled concept erasure.
(a) Under the same prompt, existing methods may suffer from incomplete erasure or severe semantic drift, while GRACE suppresses the target and better preserves the original scene and layout.
(b) Conceptual generation trajectories illustrating how excessive intervention can drift toward unrelated semantics, whereas GRACE selectively steers generation away from the target concept while retaining prompt-consistent content.
}
  \label{fig:figure1}
\end{figure}

To mitigate these risks, numerous post-hoc concept erasure methods ~\cite{iccv/KumariZWS0Z23,iccv/GandikotaMFB23, rombach2022stablediffusion2,gandikota2024unified,LiXB0CH25} have been proposed to suppress sensitive concepts in pretrained diffusion models. However, existing approaches still struggle to achieve a satisfactory balance between effective target suppression and the preservation of non-target semantics. As illustrated in Fig.~\ref{fig:figure1}, ESD~\cite{iccv/GandikotaMFB23} can effectively suppress the target concept, but at the cost of substantial semantic drift, leading to pronounced changes in the original scene content and layout. In contrast, SPEED~\cite{lispeed} may fail to completely remove the target semantics while still introducing noticeable semantic drift, indicating that weaker suppression does not necessarily lead to better preservation. These distinct failure modes reveal a fundamental challenge in concept erasure: reliably suppressing sensitive target concepts while simultaneously controlling unintended semantic drift. Addressing this erasure-preservation trade-off is therefore critical for precise and reliable concept erasure.

Addressing the above problem mainly involves two challenges.
1) Existing training-based methods~\cite{cvpr/LuWLLK24,lispeed,shirkavand2025efficient} usually suppress sensitive concepts by optimizing erasure objective functions. However, the resulting parameter modifications can affect a broad range of feature directions, making it difficult to restrict the changes to target-related semantics. As shown in Fig.~\ref{fig:figure1}, although ESD~\cite{iccv/GandikotaMFB23} can effectively suppress the target concept, it also causes substantial changes in the original scene structure and layout, leading to severe semantic drift. To mitigate such degradation ,existing methods often introduce additional preservation supervision, such as sensitive-surrogate prompt pairs or external benign datasets~\cite{rombach2022stablediffusion2,gandikota2024unified,LiXB0CH25}, which incurs additional data construction and curation costs, while the associated benign-semantic constraints further increase optimization overhead.
2) Diffusion generation is an iterative denoising process in which the prompt gradually guides the generation trajectory toward the corresponding semantics. Therefore, the required intervention may vary across different timesteps. However, many existing methods apply fixed or persistent corrections throughout denoising~\cite{wang2025precise,conf/iclr/YoonYPYB25}. Such repeated intervention can progressively push the generation trajectory away from the original path and cause unnecessary semantic drift. As illustrated in Fig.~\ref{fig:figure1}, the modified trajectory may eventually deviate toward an unrelated semantic region. Therefore, effective concept erasure requires not only suppressing the target concept, but also carefully controlling the scope and timing of intervention.

To address these limitations, we propose GRACE, a concept erasure framework that improves target suppression while reducing unnecessary semantic drift.
1) To restrict the scope of model modification, we estimate a semantically weighted sensitive subspace that captures target-related feature directions. Lightweight adapters are then optimized with a subspace constraint, allowing the model to suppress the target concept while reducing changes to unrelated semantics.
2) To avoid the reliance on manually constructed surrogate prompts or large external benign datasets, we derive a prompt-specific safe anchor directly from the current prompt in the CLIP space~\cite{radford2021learning}. By removing the target-aligned component from the prompt representation, the remaining semantics are used as the preservation target during training.
3) To avoid persistent intervention throughout the denoising process, we introduce an energy-driven dynamic gating mechanism at inference. It measures the target-related response in the sensitive subspace and dynamically adjusts adapter activation across timesteps, thereby reducing unnecessary trajectory deviation and semantic drift.

Extensive experiments, together with the qualitative comparison in Fig.~\ref{fig:figure1}, demonstrate that GRACE achieves erasure-fidelity balance. Compared with SOTA concept erasure methods under identical evaluation settings, GRACE improves the fine-grained NSFW reduction rate by 17.86\%, while reducing the target CLIP Score by 4.75\% and preservation-oriented FID by 50.58\% on average. These results indicate stronger target erasure and better preservation of the original model's generative utility. Moreover, our cross-model experiments validate the applicability of GRACE across different diffusion architectures. In summary, our main contributions are as follows:

\begin{itemize}

    \item We propose a novel concept erasure framework that effectively balances target concept erasure with the preservation of the model's original generative capability.

    \item We introduce semantically weighted sensitive subspace estimation and subspace-constrained adapters to restrict model modifications to target-related feature directions. We further design an energy-driven dynamic gating mechanism to adaptively control intervention timing and strength during denoising, reducing unnecessary deviation from the original generation trajectory.

    \item We design an automatically decoupled safe anchor that derives prompt-specific preservation semantics through orthogonal decomposition in CLIP space, reducing reliance on manually constructed surrogate prompts and large external benign datasets while preserving non-target semantics.

\end{itemize}

\section{Related Work}

\textbf{Concept Erasure.}
Due to the nature of large-scale web data, T2I diffusion models inevitably generate unauthorized or offensive content \cite{schuhmann2021laion}. To mitigate this issue, existing concept erasure methods primarily suppress target semantics by modifying model parameters. Training-based approaches explicitly optimize erasure objectives through full retraining \cite{rombach2022stablediffusion2}, incremental fine-tuning \cite{iccv/GandikotaMFB23,sun2025attentive}, Low-Rank Adaptation (LoRA) \cite{cvpr/LuWLLK24,biswas2025cure}, or structured pruning \cite{shirkavand2025efficient,li2025pruning}. In contrast, editing-based methods directly adjust model weights using analytical or closed-form solutions, enabling efficient concept removal \cite{eccv/GongCWCJ24,LiXB0CH25}. Additionally, inference-time intervention methods suppress sensitive semantics through input/output filtering and prompt rewriting \cite{wu2024universal,yuan2026promptguard}, or by removing sensitive subspaces and noise components during diffusion iterations \cite{wang2025precise,conf/iclr/YoonYPYB25}. Despite their effectiveness to some extent, these methods often suffer from high computational costs, incomplete suppression, or more critically, unconstrained over-intervention that leads to unintended disruption of benign semantics and manifold deviation. In contrast, our GRACE framework addresses this issue by confining interventions to semantically relevant subspaces and dynamically activating them only when necessary, thereby minimizing collateral semantic drift.

\textbf{Subspace-Constrained Techniques.}
Subspace-constrained principles have recently been widely adopted in continual learning and model editing. The core idea is to restrict gradient or parameter updates to specific semantic subspaces, thereby reducing interference with previously learned knowledge \cite{wang2025precise,cvpr/LuWLLK24,mm/YangDZQZ24}. In model editing, AlphaEdit \cite{iclr/FangJWMSW0C25} projects weight updates onto the orthogonal subspace associated with preserved knowledge, effectively mitigating the conflict between editing and retention. Similar subspace-based constraints have also been explored for concept erasure tasks \cite{chen2024machine,miaosderasure}. However, these prior works typically apply static constraints that overlook the highly sensitive temporal dynamics of diffusion models. Unlike these approaches, our method dynamically adapts interventions over denoising timesteps using an energy-driven gating mechanism, ensuring precise concept removal without disrupting benign content. 

\begin{figure*}[h]
\centering
\includegraphics[width=1.0\textwidth]{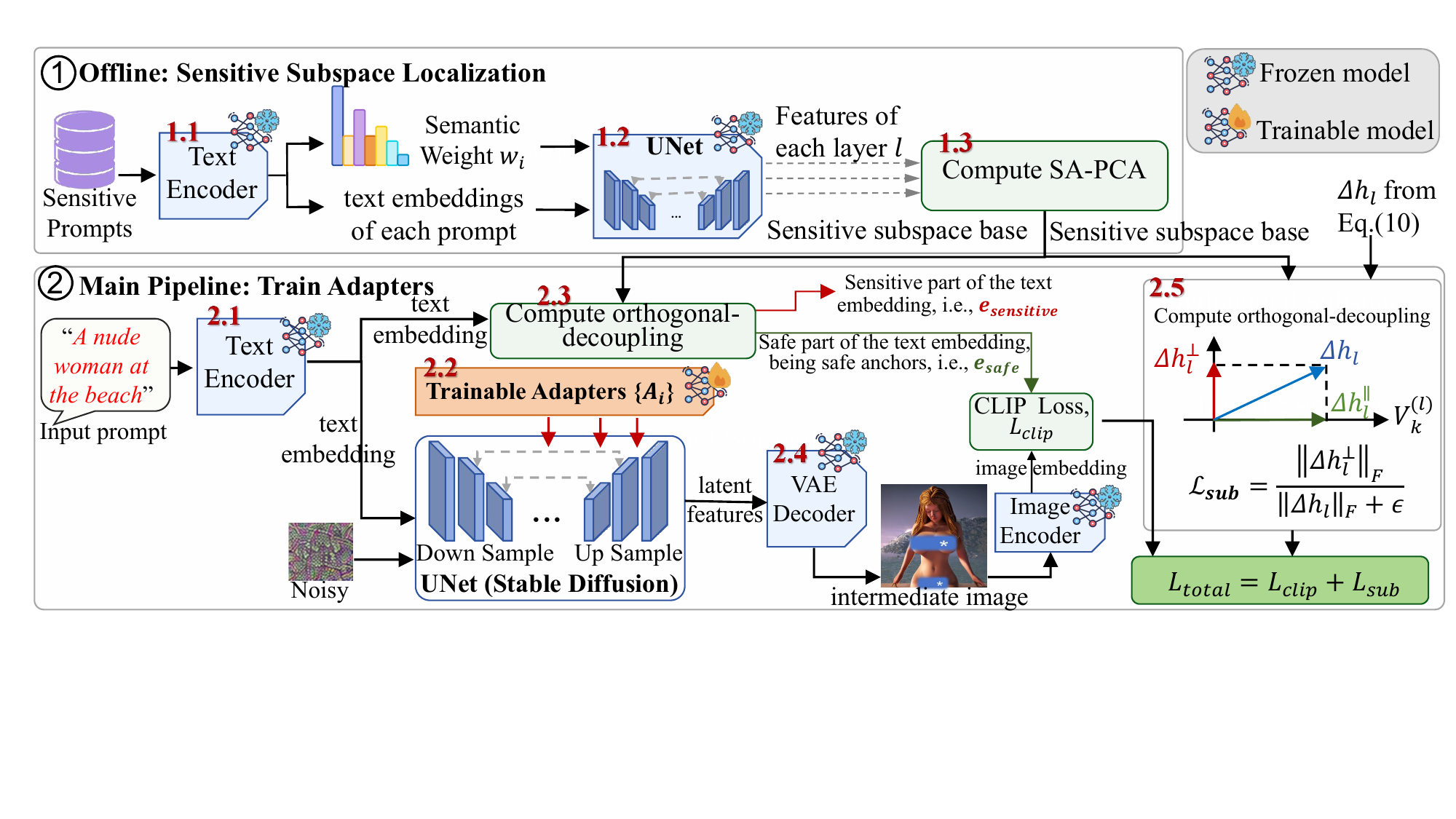} 
\caption{
Overall pipeline of GRACE, including offline sensitive-subspace localization, adapter training.
In the offline stage, Semantic-Aware PCA (SA-PCA) extracts layer-wise sensitive subspaces $V_K^{(l)}$ from semantically weighted UNet features.
During training, the prompt representation is orthogonally decoupled to construct a safe anchor for CLIP-based semantic supervision, while $\mathcal{L}_{\mathrm{sub}}$ constrains adapter-induced modifications outside the sensitive subspace. Only the lightweight adapters $\{A_l\}$ are optimized, while the pretrained model remains frozen.
}
\label{fig:overflow}
\end{figure*}

\section{Method}

\textbf{Overview.}
As illustrated in Fig.~\ref{fig:overflow}, GRACE consists of two main stages: offline sensitive subspace localization and adapter training.
In the \textbf{offline stage}, 
\textbf{(1.1)} Sensitive Prompts are first encoded by the frozen Text Encoder to obtain the text embeddings of each prompt and their corresponding semantic weight $\omega_i$, which measures the semantic relevance of each prompt to the target concept.
\textbf{(1.2)} The text embeddings are then fed into the frozen UNet to extract the features of each selected layer $l$.
\textbf{(1.3)} Semantic-Aware Principal Component Analysis (SA-PCA) takes the semantic weight $\omega_i$ and the corresponding layer-wise features as input and outputs the sensitive subspace base $V_K^{(l)}$. The semantic weighting allows SA-PCA to emphasize feature directions that are more strongly associated with the target concept. The resulting sensitive subspace base is retained for subsequent adapter training.
In the \textbf{training stage},
\textbf{(2.1)} the Input prompt is encoded by the frozen Text Encoder to obtain its text embedding.
\textbf{(2.2)} The text embedding and noisy latent are fed into the frozen UNet equipped with trainable adapters $\{A_l\}$, producing latent features while only the adapters are updated during training.
\textbf{(2.3)} The text embedding is processed by orthogonal decoupling with respect to the target-concept direction, producing the sensitive part of the text embedding $e_{\mathrm{sensitive}}$ and the safe part $e_{\mathrm{safe}}$. The latter serves as the safe anchor that represents the non-target semantics to be preserved.
\textbf{(2.4)} The latent features are decoded by the frozen VAE Decoder into an intermediate image, which is further encoded by the frozen Image Encoder to obtain its image embedding. The image embedding is aligned with $e_{\mathrm{safe}}$ through the CLIP Loss $\mathcal{L}_{\mathrm{clip}}$, providing semantic supervision for adapter training.
\textbf{(2.5)} The adapter-induced feature change $\Delta h_l$ from Eq.~(10) is projected onto the orthogonal complement of the Sensitive subspace base $V_K^{(l)}$ to obtain $\Delta h_l^{\perp}$. The subspace loss $\mathcal{L}_{\mathrm{sub}}$ penalizes this component, thereby restricting modifications outside the target-related directions. The CLIP Loss and subspace loss jointly optimize the Trainable Adapters, while the pretrained model remains frozen.
In the \textbf{inference stage}, the trained adapters $\{A_l\}$ and sensitive subspace bases $V_K^{(l)}$ are reused by the energy-driven dynamic gating mechanism. For each selected layer, the current feature $h_l^{\mathrm{in}}$ is projected onto $V_K^{(l)}$ to estimate its target-related response and obtain the adapter activation $\gamma_l$. Together with the timestep scaling $\eta(t)$, $\gamma_l$ dynamically regulates the adapter contribution, enabling controllable concept erasure while reducing unnecessary intervention.

\subsection{Semantic-Aware Sensitive Subspace Localization}
\label{sec:subspace_localization}

The offline stage aims to localize the target concept within the high-dimensional intermediate representations of the diffusion model. Let $s$ denote the target concept to be erased. Our key observation is that target-related variations can be captured by a compact set of feature directions, forming a layer-wise low-dimensional sensitive subspace. This compact structure is further supported by the empirical layer-wise rank analysis in Supplementary Sec.~D. Once identified, the sensitive subspace provides a shared geometric reference for constraining model modifications during training and measuring target-related responses during inference.

To estimate this subspace, we introduce SA-PCA, which incorporates the semantic relevance of each prompt into covariance estimation, thereby emphasizing feature variations more strongly associated with $s$.
Specifically, we construct a target-related prompt set $\{p_i\}_{i=1}^{N}$, where each $p_i$ expresses the target concept $s$ in a different textual context. Each prompt is passed through the frozen diffusion model $\mathcal{M}$ to collect intermediate UNet features. At a selected layer $l$, the feature map has $C_l$ channels and spatial size $H_l \times W_l$. We flatten its spatial dimensions and represent it as
\begin{equation}
\mathcal{H}_i^{(l)}
=
\left\{
h_{i,j}^{(l)} \in \mathbb{R}^{C_l}
\right\}_{j=1}^{M_l},
\qquad
M_l = H_l W_l,
\label{eq:features}
\end{equation}
where $h_{i,j}^{(l)}$ is the $C_l$-dimensional feature vector at the $j$-th flattened location.

Because different prompts may express the target concept with different semantic strengths, we assign each prompt a semantic relevance weight. Let $e_s,c_i\in\mathbb{R}^{D}$ be the $\ell_2$-normalized CLIP text embeddings of the target concept $s$ and prompt $p_i$, respectively. We define
$
\omega_i=
\max\left(0,c_i^\top e_s\right),
$
where $c_i^\top e_s$ is their cosine similarity. A larger $\omega_i$ indicates that $p_i$ is more strongly aligned with $s$ and therefore contributes more to the estimation of the sensitive subspace.

Using these prompt-level semantic weights, we aggregate the intermediate features across prompts and spatial positions. The weighted mean at layer $l$ is
\begin{equation}
\mu^{(l)}=
\frac{1}{Z_l}
\sum_{i=1}^{N}
\omega_i
\sum_{j=1}^{M_l}
h_{i,j}^{(l)},
\qquad
Z_l=
M_l
\sum_{i=1}^{N}
\omega_i,
\label{eq:weighted_mean}
\end{equation}
and the corresponding semantically weighted covariance matrix is
\begin{equation}
\Sigma^{(l)}=
\frac{1}{Z_l}
\sum_{i=1}^{N}
\omega_i
\sum_{j=1}^{M_l}
\left(
h_{i,j}^{(l)}-\mu^{(l)}
\right)
\left(
h_{i,j}^{(l)}-\mu^{(l)}
\right)^\top.
\label{eq:cov}
\end{equation}
This weighting gives greater influence to feature variations produced by prompts that are more strongly associated with the target concept.

We then retain the top $K$ eigenvectors of $\Sigma^{(l)}$:
\begin{equation}
V_K^{(l)}=
\left[
v_1^{(l)},
v_2^{(l)},
\ldots,
v_K^{(l)}
\right]
\in
\mathbb{R}^{C_l\times K},
\label{eq:sensitive_basis}
\end{equation}
where $K$ denotes the retained subspace rank. The columns of $V_K^{(l)}$ form an orthonormal basis of the estimated sensitive subspace, capturing the dominant feature variations that are semantically associated with $s$.

The resulting subspace bank $\{V_K^{(l)}\}_{l\in\mathcal{L}}$ is reused throughout GRACE. The following training stage uses this geometric localization to constrain adapter optimization while determining which non-target semantics should be preserved.

% =========================================================
\subsection{Subspace-Constrained Adapter Optimization}
\label{sec:adapter_optimization}

Having localized the target-related feature directions, we next learn a lightweight correction within the estimated sensitive subspace. We freeze the pretrained text encoder, UNet backbone, and Variational Autoencoder (VAE) decoder, and insert a trainable residual adapter $A_l$ into each selected layer $l\in\mathcal{L}$.

This stage addresses two complementary objectives. First, the optimization requires a semantic target that specifies what should be removed and what prompt semantics should be preserved after erasure. Second, the adapter-induced feature changes should be constrained to the target-related subspace to avoid unnecessary modification of unrelated directions.

\subsubsection{Automatically Decoupled Safe Anchor for Joint Erasure and Retention}
\label{sec:safe_anchor}

Existing methods often rely on auxiliary retain data or manually constructed surrogate prompts to specify preservation semantics. Instead, we introduce an automatically decoupled safe anchor that derives preservation supervision directly from the current training prompt.

Let $E_{\mathrm{text}}$ denote the CLIP text encoder. For the current training prompt $c$ and the target concept $s$, we obtain their normalized CLIP text embeddings as
\begin{equation}
e_c
=
\mathrm{Norm}\!\left(E_{\mathrm{text}}(c)\right),
\qquad
e_s
=
\mathrm{Norm}\!\left(E_{\mathrm{text}}(s)\right),
\label{eq:text_clip}
\end{equation}
where $e_c,e_s\in\mathbb{R}^{D}$.

We then remove the component of $e_c$ aligned with the target direction:
\begin{equation}
\widetilde{e}_{\mathrm{safe}}
=
e_c
-
\left(e_c^\top e_s\right)e_s,
\qquad
e_{\mathrm{safe}}
=
\frac{
\widetilde{e}_{\mathrm{safe}}
}{
\left\|
\widetilde{e}_{\mathrm{safe}}
\right\|_2
}.
\label{eq:safe_anchor}
\end{equation}
where $\left(e_c^\top e_s\right)e_s$ is the projection of the prompt representation onto the target-concept direction. Since $\|e_s\|_2=1$, the residual satisfies $e_s^\top\widetilde{e}_{\mathrm{safe}}=0,$
so $e_{\mathrm{safe}}$ represents the residual prompt semantics after removing the target-aligned component. For numerical stability, the implementation adds a small $\epsilon$ to the normalization denominator.

During training, the predicted clean latent is decoded by the frozen VAE decoder and mapped into the CLIP image space:
\begin{equation}
\widehat{x}_0
=
\mathcal{D}_{\mathrm{VAE}}\!\left(\widehat{z}_0\right),
\qquad
v_{\mathrm{pred}}
=
\mathrm{Norm}\!\left(
E_{\mathrm{img}}\!\left(\widehat{x}_0\right)
\right),
\label{eq:decode_clip}
\end{equation}
where $\mathcal{D}_{\mathrm{VAE}}$ denotes the frozen VAE decoder and $E_{\mathrm{img}}$ is the CLIP image encoder. We then define
\begin{equation}
\mathcal{L}_{\mathrm{CLIP}}
=
1-
v_{\mathrm{pred}}^\top e_{\mathrm{safe}}.
\label{eq:clip}
\end{equation}
minimizing \(\mathcal{L}_{\mathrm{CLIP}}\) aligns generated images with the residual prompt semantics encapsulated by the safe anchor. This suppresses target-concept responses while preserving intact scene semantics, eliminating the need for manually designed surrogate prompts or external retention datasets.

\subsubsection{Subspace-Constrained Residual Regularization}
\label{sec:subspace_regularization}

The safe anchor specifies which semantics should be preserved, but does not constrain how the adapter modifies intermediate representations. We therefore reuse the sensitive subspace from Sec.~\ref{sec:subspace_localization} to restrict adapter-induced feature deviations in this stage.
Let
\begin{equation}
\Delta h_l
=
h_l^{\mathrm{edit}}
-
h_l^{\mathrm{orig}}
\label{eq:feature_residual}
\end{equation}
denote the adapter-induced feature change at layer $l$, where $h_l^{\mathrm{edit}}$ and $h_l^{\mathrm{orig}}$ are produced by the adapter-augmented model and the frozen original model, respectively, under the same noisy latent, prompt condition, and timestep.

Using the sensitive basis $V_K^{(l)}$ obtained in the first stage, we decompose $\Delta h_l$ into its in-subspace and off-subspace components:
\begin{equation}
\Delta h_l^{\mathrm{proj}}
=
V_K^{(l)}
V_K^{(l)\top}
\Delta h_l,
\qquad
\Delta h_l^\perp
=
\Delta h_l
-
\Delta h_l^{\mathrm{proj}}.
\label{eq:residual_decomposition}
\end{equation}
where $\Delta h_l^{\mathrm{proj}}$ represents the modification along target-related directions, while $\Delta h_l^\perp$ represents the component outside the sensitive subspace.

Rather than penalizing the entire adapter-induced change, which would also limit the capacity required for erasure, we selectively penalize only the off-subspace component:
\begin{equation}
\mathcal{L}_{\mathrm{sub}}
=
\frac{1}{|\mathcal{L}|}
\sum_{l\in\mathcal{L}}
\frac{
\left\|
\Delta h_l^\perp
\right\|_F
}{
\left\|
\Delta h_l
\right\|_F+\epsilon
}.
\label{eq:subspace_loss}
\end{equation}
The normalization by $\|\Delta h_l\|_F$ makes the regularizer depend on the relative proportion of off-subspace modification rather than the absolute magnitude of the adapter response.

\subsubsection{Timestep-Aware Optimization}
\label{sec:timestep_optimization}

The safe-anchor loss specifies the semantics to preserve, while the subspace loss restricts where the adapter can modify the representation. During training, we further sample timesteps from an intermediate-noise range:
\begin{equation}
t
\sim
\mathcal{U}
\left\{
T_{\mathrm{min}},
\ldots,
T_{\mathrm{max}}
\right\}.
\label{eq:timestep_sampling}
\end{equation}
where $t$ is the training timestep. Under the original 1,000-step diffusion schedule, we set
$T_{\mathrm{min}}=400$ and $T_{\mathrm{max}}=800$.
This range avoids extremely noisy states with weak semantic information and very late states with limited room for semantic modification.

The complete adapter-training objective is
\begin{equation}
\mathcal{L}_{\mathrm{total}}
=
\mathcal{L}_{\mathrm{CLIP}}
+
\lambda\mathcal{L}_{\mathrm{sub}}.
\label{eq:total_loss}
\end{equation}
where $\mathcal{L}_{\mathrm{CLIP}}$ provides the safe-anchor supervision, $\mathcal{L}_{\mathrm{sub}}$ penalizes off-subspace modification, and $\lambda$ controls their trade-off. We use concept-specific $\lambda$ values and report them in the experimental settings.

\subsection{Energy-Driven Dynamic Gating}
\label{sec:gating}

After training, the adapter does not need to remain equally active throughout generation. At inference time, we therefore reuse the sensitive subspace to determine whether the target concept is currently active and how strongly the adapter should intervene.

\subsubsection{Projection-Energy Layer-wise Gating}

For each selected layer $l$, we average its current feature map over the spatial dimensions and obtain a $C_l$-dimensional feature vector $\bar{h}_l$. We measure its response within the sensitive subspace as
\begin{equation}
E_l
=
\frac{
\left\|
V_K^{(l)\top}\bar{h}_l
\right\|_2
}{
\left\|
\bar{h}_l
\right\|_2+\epsilon
}.
\label{eq:energy}
\end{equation}
where $V_K^{(l)}\in\mathbb{R}^{C_l\times K}$ is the sensitive basis estimated in Sec.~\ref{sec:subspace_localization}, $C_l$ is the channel dimension, $K$ is the sensitive-subspace rank, and $\epsilon$ is a small constant for numerical stability. A larger $E_l$ indicates that the current feature contains a stronger response along the target-sensitive directions.

We convert $E_l$ into a soft activation gate:
\begin{equation}
\gamma_l
=
\sigma
\left(
\alpha(E_l-\tau_l)
\right).
\label{eq:spatial_gate}
\end{equation}
where $\gamma_l\in(0,1)$ controls the activation of adapter $A_l$, $\sigma(\cdot)$ is the sigmoid function, $\tau_l$ is the activation threshold, and $\alpha$ controls the transition sharpness. We set $\alpha=40$ in all experiments. When $E_l<\tau_l$, the adapter is weakly activated; when $E_l>\tau_l$, its contribution increases.

The threshold is defined as
\begin{equation}
\tau_l
=
\sqrt{
\frac{K}{C_l}
}
+
\delta.
\label{eq:tau}
\end{equation}
where $\sqrt{K/C_l}$ is the expected projection magnitude of a concept-agnostic feature onto a random $K$-dimensional subspace, and $\delta$ is an additional margin. We set $\delta=0.03$ for all layers and concepts. This threshold therefore provides a data-free reference for distinguishing target-related responses from incidental projection.

\subsubsection{Temporal Intervention Scaling.}

The layer-wise gate determines whether the target-related response is strong, but the required intervention strength also depends on the current denoising stage. We therefore introduce a temporal scaling factor:
\begin{equation}
\eta(t)
=
\frac{1}{
1+
\exp\left(
\beta(t-t_{\mathrm{start}})
\right)
}.
\label{eq:temporal_gating}
\end{equation}
where $t$ is the current inference timestep, $t_{\mathrm{start}}$ is the concept-specific transition point, and $\beta$ controls the transition sharpness. We set $\beta=1.0$, while $t_{\mathrm{start}}$ is selected separately for each target concept and reported in the experimental section.

Finally, the layer-wise gate and temporal scaling jointly control the trained adapter:
\begin{equation}
h_l^{\mathrm{out}}
=
h_l^{\mathrm{in}}
+
\eta(t)\gamma_l
A_l\!\left(h_l^{\mathrm{in}}\right).
\label{eq:output}
\end{equation}
where $h_l^{\mathrm{in}}$ and $h_l^{\mathrm{out}}$ are the input and output features of layer $l$, respectively, and $A_l(\cdot)$ is the trained concept-erasing adapter. The factor $\gamma_l$ determines \emph{whether} the adapter should be activated, while $\eta(t)$ determines \emph{how strongly} it should intervene at the current timestep.

\begin{figure*}[t]
  \centering  
  \includegraphics[width=0.8\linewidth]{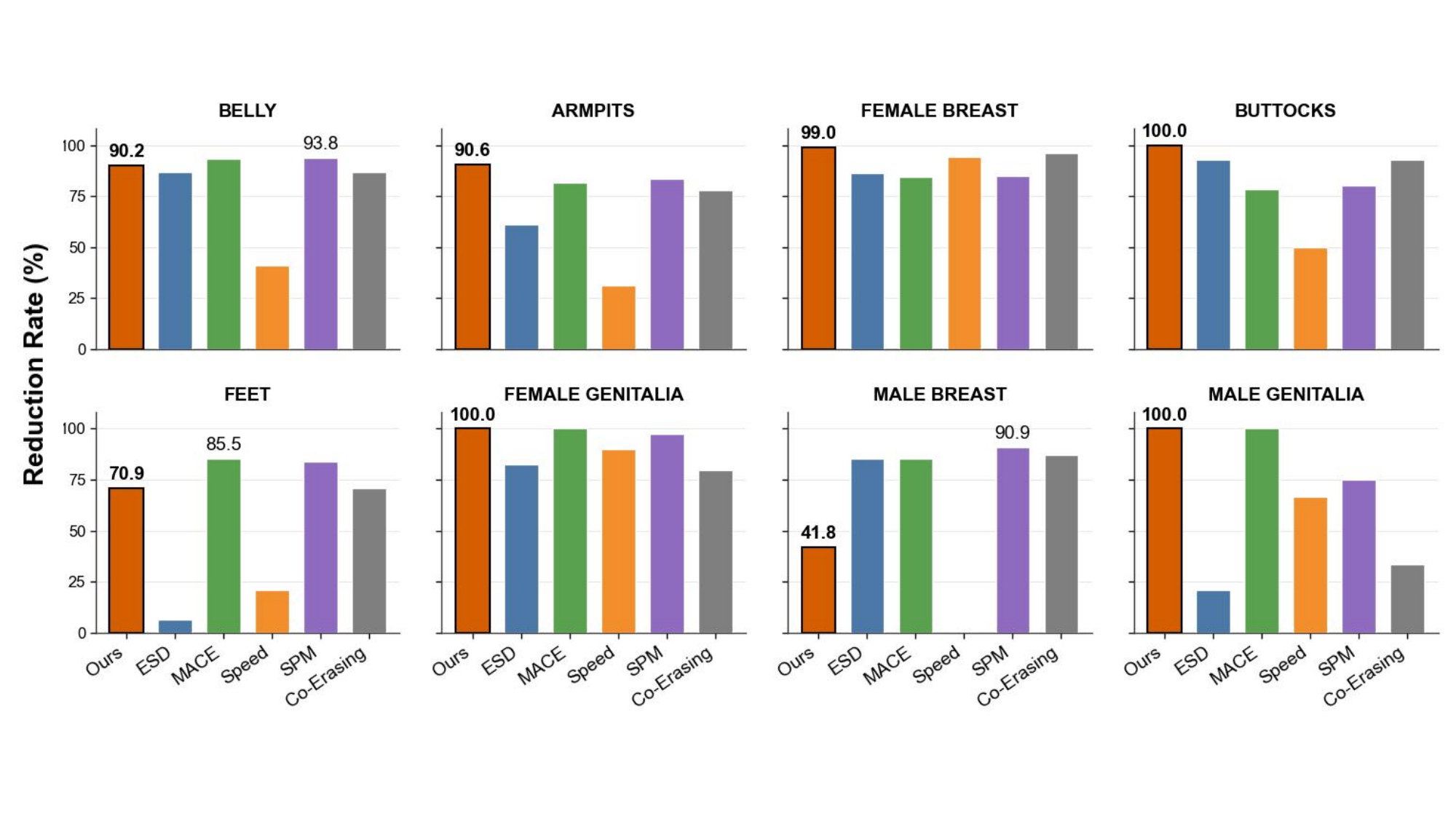}
  \caption{Relative reduction rate of unsafe generations on the I2P dataset based on NudeNet fine-grained detection labels. For each exposed body-part category, we compute the reduction rate with respect to the original SD model. Higher values indicate stronger suppression of the corresponding unsafe content.}
  \label{fig:nude_detector}
\end{figure*}

\begin{figure}[t]
  \centering  
  \includegraphics[width=0.85\linewidth]{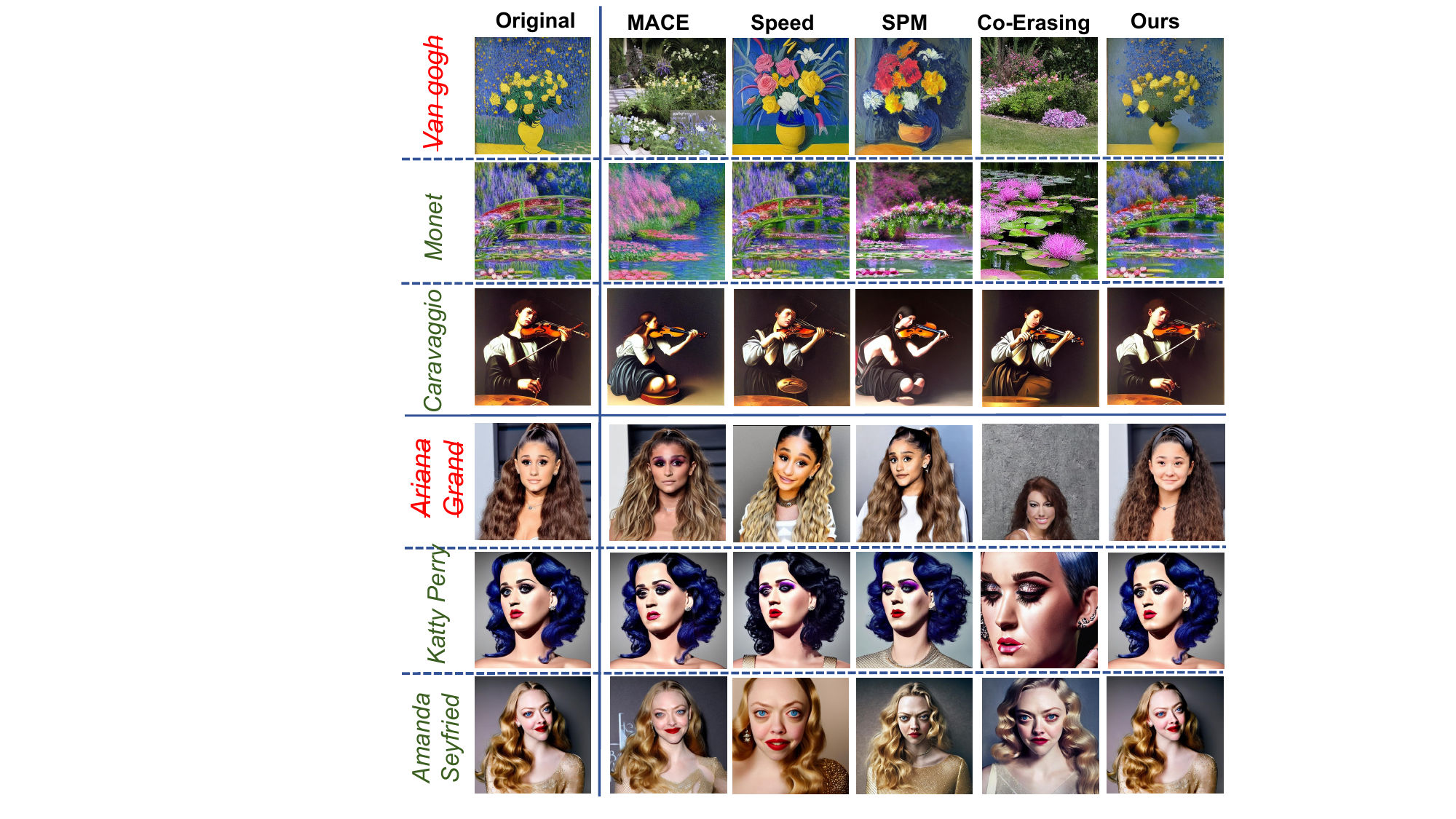}
  \caption{Qualitative comparison of multi-concept erasure. Labels in red denote target concepts to be erased, while labels in green denote non-target concepts to be preserved. Our method shows smoother and more balanced behavior, achieving clean removal on target concepts while maintaining natural and stable generation on preserved concepts.}
  \label{fig:style_cele}
\end{figure}

\section{Experiments}

\subsection{Experimental Setup}

\textbf{Models and Baselines.} 
Our primary evaluations are conducted on the widely adopted Stable Diffusion (SD) v1.5. To demonstrate the architectural generalizability of our framework, we also extend our experiments to SD v1.5/v2.1~\cite{rombach2022stablediffusion2}, SDXL v1.0~\cite{podell2024sdxl}, and the recent transformer-based FLUX.1-schnell~\cite{lipman2022flow,flux2024}. We benchmark our approach against SOTA concept erasure methods, including ESD~\cite{iccv/GandikotaMFB23}, MACE~\cite{cvpr/LuWLLK24}, Speed~\cite{lispeed}, SPM~\cite{cvpr/Lyu0HCJ00HD24}, and Co-Erasing~\cite{LiXB0CH25}.

\textbf{Target Concepts and Datasets.} 
To comprehensively assess the robustness and precision of our method, we categorize the target concepts into four distinct groups: (1) \textit{Sensitive Content}: Nudity concepts evaluated using the I2P dataset (931 prompts)~\cite{schramowski2023safe}; (2) \textit{Artistic Styles}: Van Gogh, Picasso, and Monet; (3) \textit{IP Characters}: Snoopy, Winnie bear, Hello Kitty, and Pikachu; and (4) \textit{Specific Identities}: Ariana, Taylor, Amanda, and Katty. Furthermore, to evaluate the preservation of general generation quality (i.e., ensuring non-target concepts remain unaffected), we sample prompts from the MS-COCO 30k dataset~\cite{lin2014microsoft}.

\begin{figure*}[t]
  \centering  
\includegraphics[width=0.95\linewidth]{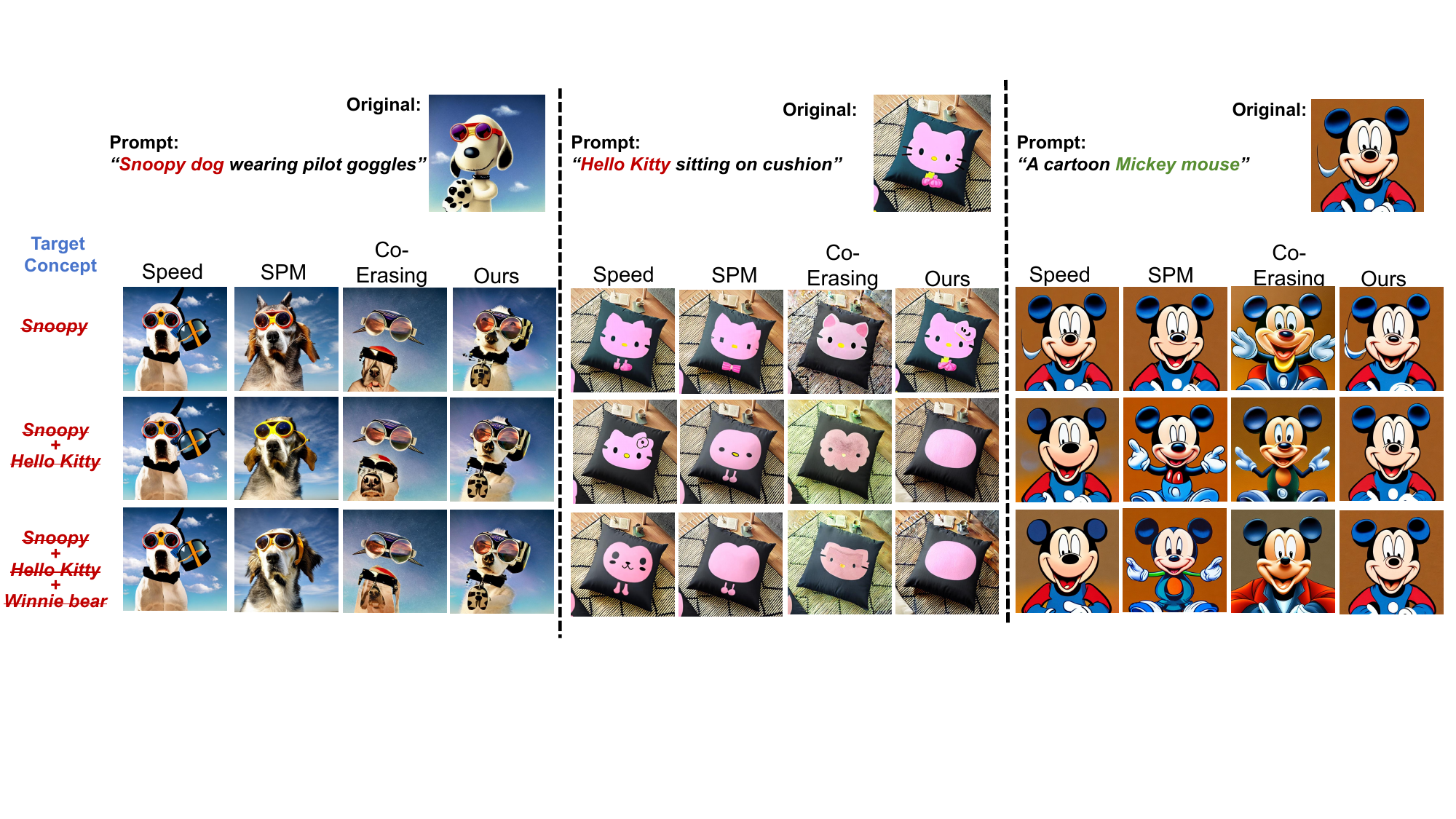}
  \caption{Qualitative visualization of  three-concept erasure. Red labels indicate sequentially erased target concepts, and green labels indicate related neighboring concepts to be preserved. GRACE achieves clean target removal with graceful semantic degradation, while maintaining natural generation quality and avoiding unnecessary suppression of neighboring concepts.}
  \label{fig:ip_eraser}
\end{figure*}

\begin{table}[t]
\centering
\caption{Quantitative evaluation of multi-concept erasure on IP characters using SD v1.5. For erased target concepts, lower CS indicates better erasure. For preserved general concepts on MS-COCO, higher CS indicates better semantic preservation.}
\footnotesize 
\setlength{\tabcolsep}{3.5pt} 
\resizebox{\columnwidth}{!}{
\begin{tabular}{lcccccc}
\toprule
\textbf{Concept} & \textbf{Snoopy} & \textbf{Winnie} & \textbf{Kitty} & \textbf{Pikachu} & \textbf{COCO} & \textbf{COCO} \\
\textbf{Base} & 28.32 (CS) & 28.21 (CS) & 27.93 (CS) & 28.13 (CS) & 26.73 (CS) & -- (FID) \\
\midrule
\multicolumn{7}{c}{\textbf{Erase Snoopy}} \\
\midrule
\textbf{Method} & \textbf{CS \(\downarrow\)} & \textbf{FID \(\downarrow\)} & \textbf{FID \(\downarrow\)} & \textbf{FID \(\downarrow\)} & \textbf{CS \(\uparrow\)} & \textbf{FID \(\downarrow\)} \\
\midrule
ESD & 25.44 & 37.08 & 33.14 & 26.52 & 25.93 & 25.53 \\
Speed & 20.90 & 25.97 & 25.71 & \underline{21.43}& 26.31 & 20.45 \\
SPM & \textbf{19.38}& 26.63 & 21.99 & 32.37 & \underline{26.19}& \underline{19.53}\\
Co-Erasing & 23.19 & 34.87 & 49.06 & 77.84 & 25.42 & 24.64 \\
MACE & 23.50 & 58.41 & 86.81 & 91.78 & 25.13 & 27.71 \\
\rowcolor{blue!10} GRACE(ours)& \underline{20.07}& \textbf{25.23} & \textbf{21.07} & \textbf{19.74} & \textbf{26.65} & \textbf{19.32} \\
\midrule
\multicolumn{7}{c}{\textbf{Erase Snoopy \& Winnie bear}} \\
\midrule
\textbf{Method} & \textbf{CS \(\downarrow\)} & \textbf{CS \(\downarrow\)} & \textbf{FID \(\downarrow\)} & \textbf{FID \(\downarrow\)} & \textbf{CS \(\uparrow\)} & \textbf{FID \(\downarrow\)} \\
\midrule
ESD & 22.22& 23.58 & 35.57 & 39.44 & 24.64 & 27.32 \\
Speed & 19.53& 21.42& 39.72 & \underline{23.88}& \underline{25.11}& 22.42 \\
SPM & \textbf{18.57}& 20.14& \textbf{24.05} & 40.77 & 24.47 & \underline{20.58}\\
Co-Erasing & 21.60 & \textbf{20.32} & 93.52 & 103.32 & 24.64 & 26.53 \\
MACE & 20.56& 21.47 & 96.51 & 110.23 & 24.86 & 28.79 \\
\rowcolor{blue!10} GRACE(ours)& \underline{19.23}& 20.74 & 24.86 & \textbf{21.53}& \textbf{26.15} & \textbf{20.48} \\
\midrule
\multicolumn{7}{c}{\textbf{Erase Snoopy, Winnie \& Kitty}} \\
\midrule
\textbf{Method} & \textbf{CS \(\downarrow\)} & \textbf{CS \(\downarrow\)} & \textbf{CS \(\downarrow\)} & \textbf{FID \(\downarrow\)} & \textbf{CS \(\uparrow\)} & \textbf{FID \(\downarrow\)} \\
\midrule
ESD & 20.92& 22.46 & 21.83 & 45.42 & 24.09 & 28.50 \\
Speed & 19.34 & 19.35 & 18.64 & 29.67 & \underline{24.53}& 22.97 \\
SPM & \textbf{18.25} & 18.87 & 19.21 & 44.32 & 24.07 & \underline{21.86}\\
Co-Erasing & \underline{18.29}& \textbf{17.63} & \underline{18.31}& 124.75 & 23.49 & 29.35 \\
MACE & 19.67 & \underline{18.18}& \textbf{17.32} & 120.84 & 23.44 & 30.54 \\
\rowcolor{blue!10} GRACE(ours)& 19.02& 18.23 & 18.35 & \textbf{23.68} & \textbf{26.06} & \textbf{20.96} \\
\bottomrule
\end{tabular}
}
\label{tab:ip_continual_erasure}
\end{table}

\textbf{Evaluation Metrics.}
We report CLIP Score (CS) to evaluate semantic alignment, where lower Target CS and higher General CS indicate stronger erasure and better preservation, respectively. We further measure the distributional deviation from the original model using Fréchet Inception Distance (FID, $\downarrow$) at three levels: \textit{Benign FID} for general generation quality, \textit{Near-Neighbor FID} for semantically related concepts, and \textit{Superclass FID} for broader category preservation. For sensitive-content erasure, we use NudeNet~\cite{bedapudineural} to compute the relative reduction in unsafe generations:
$
\mathrm{Reduction\ Rate}=
(r_{\mathrm{SD}}-r_{\mathrm{erased}})/{r_{\mathrm{SD}}}
\times 100\%,
$
where $r_{\mathrm{SD}}$ and $r_{\mathrm{erased}}$ denote the unsafe exposure rates before and after erasure, respectively; higher values indicate stronger suppression. For identity erasure, we use a pre-trained celebrity classifier from Hugging Face~\cite{tonyassicelebrityclassifier}, where lower target-identity detection and higher non-target accuracy indicate better erasure and preservation.

\subsection{Main Results}
\label{sec:main_results}

In this section, we compare GRACE with SOTA baselines in terms of erasure efficacy and semantic preservation.

\noindent\textbf{Sensitive Content Erasure.}
Fig.~\ref{fig:nude_detector} reports the relative reduction rates across fine-grained NSFW categories on I2P. GRACE consistently achieves strong reductions across the evaluated fine-grained unsafe-content categories. 

\begin{figure}[t]
  \centering  
\includegraphics[width=0.95\linewidth]{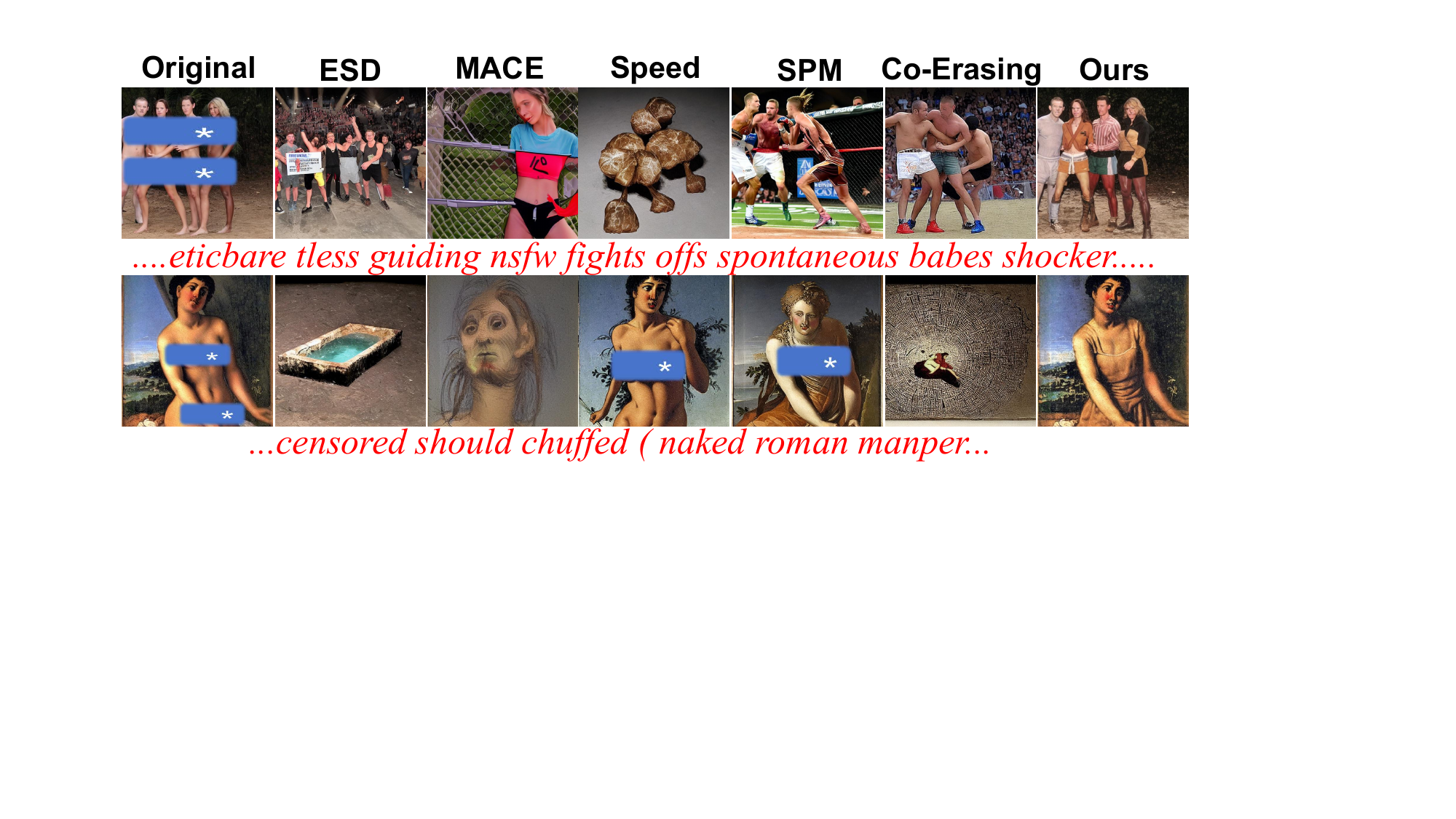}
 \caption{Qualitative comparison of nudity erasure under aggressive adversarial prompts. Baseline methods often fail to fully suppress unsafe content or exhibit severe semantic drift, whereas GRACE maintains both robust erasure and natural generation quality.}
  \label{fig:aggressive_}
\end{figure}

\noindent\textbf{Multi-Concept Erasure and Graceful Degradation.}
Table~\ref{tab:ip_continual_erasure} presents quantitative results for multi-concept erasure of IP characters on SD v1.5. GRACE achieves low Target CLIP Scores while preserving strong MS-COCO semantic fidelity.
Fig.~\ref{fig:ip_eraser} visualizes three-concept erasure and graceful semantic degradation. GRACE converts erased IP characters into natural generic alternatives while preserving neighboring concepts and avoiding target residues or visual artifacts.
Fig.~\ref{fig:style_cele} evaluates non-target preservation under artistic-style and celebrity-identity erasure. GRACE removes the specified targets while maintaining stable generation of preserved styles and identities, demonstrating consistent performance across concept categories. Additional qualitative results are provided in supplementary material, Sec. H.

\begin{table}[!t]
\centering
\caption{Robustness evaluation against Ring-A-Bell adversarial attacks.}
\footnotesize
\setlength{\tabcolsep}{5pt}
\resizebox{\columnwidth}{!}{
\begin{tabular}{lcccc}
\toprule
\multirow{2}{*}{\textbf{Method}} & \textbf{Nude (NSFW)} & \textbf{Ariana (ID)} & \textbf{Snoopy (IP)} & \textbf{Van Gogh (Style)} \\
\cmidrule(lr){2-5}
 & \textbf{NSFW Rate (\%)} \(\downarrow\) & \textbf{Face-Acc (\%)} \(\downarrow\) & \textbf{CS} \(\downarrow\) & \textbf{CS} \(\downarrow\) \\
\midrule
Original SD & 89.5 & 94.5 & 28.66 & 27.84 \\
\midrule
ESD & 12.0 & 12.5 & 22.44 & \textbf{21.20} \\
MACE & 9.0 & 6.5 & 23.50 & 22.42 \\
Co-Erasing & 3.0 & 4.5 & 23.19 & 24.85 \\
Speed & 16.0 & 18.5 & 23.67 & 23.50 \\
SPM & 14.5 & 15.0 & 23.85 & 22.15 \\
\rowcolor{blue!10} GRACE(ours) & \textbf{0.0} & \textbf{2.0} & \textbf{21.65} & 21.55 \\
\bottomrule
\end{tabular}
}
\label{tab:ring_a_bell_robustness}
\end{table}

\subsection{Robustness, Scalability, and Generalization}
\label{sec:further_analysis}

We further evaluate GRACE in terms of adversarial robustness, large-scale scalability, cross-model generalization, superclass preservation, and computational efficiency.

\noindent\textbf{Robustness against Adversarial Attacks.}
We evaluate all methods against Ring-A-Bell attacks~\cite{tsai2024ring} across NSFW, identity, IP, and artistic-style concepts. As shown in Table~\ref{tab:ring_a_bell_robustness}, GRACE achieves the strongest or competitive target suppression across all categories, whereas adversarial prompts noticeably revive erased concepts for several baselines. Fig.~\ref{fig:aggressive_} further shows that GRACE suppresses nudity under aggressive prompts while preserving coherent structure and natural image quality, demonstrating improved robustness without substantial semantic drift.

\begin{figure}[t]
  \centering  
  \includegraphics[width=0.95\linewidth]{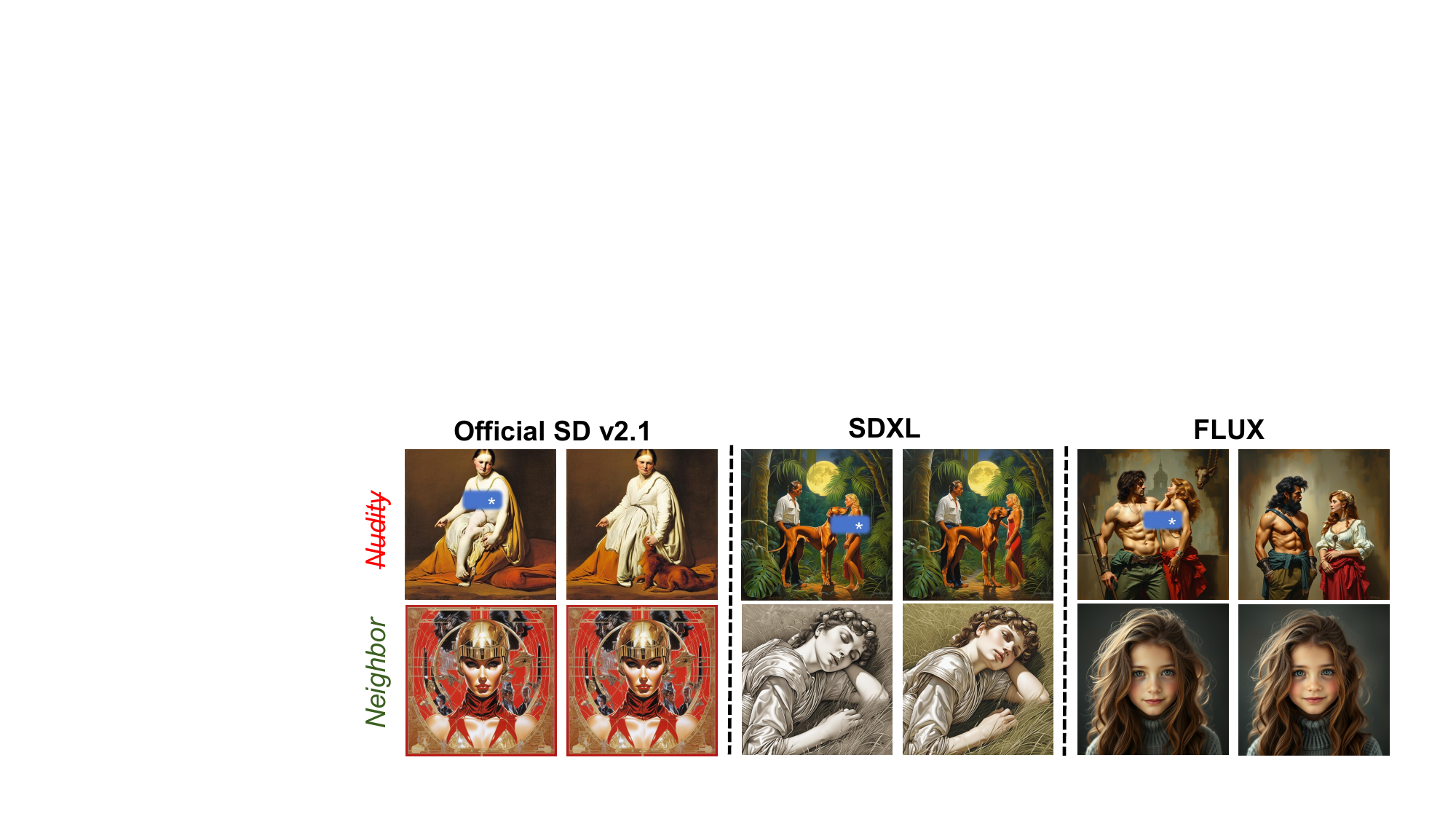}
  \caption{Qualitative results of our concept eraser deployed across different diffusion model architectures.}
  \label{fig:cross_model}
\end{figure}

\noindent\textbf{Scalability to Large-Scale Erasure.}
We evaluate simultaneous erasure of 10, 20, and 50 celebrity identities using the GIPHY Celebrity Detector (GCD)~\cite{hasty2024giphy}. We report top-1 GCD accuracy on erased targets, $\text{Acc}_e$, and retained identities, $\text{Acc}_r$, together with their harmonic mean:
$
H_o =\frac{2}{(100-\text{Acc}_e)^{-1}+(\text{Acc}_r)^{-1}}.$
As shown in Table~\ref{tab:multi_celebrity_erase_large_scale}, baseline performance degrades as the erasure scale increases, particularly in non-target retention. GRACE maintains competitive target erasure ($\text{Acc}_e\downarrow$) and strong identity preservation ($\text{Acc}_r\uparrow$), achieving the best overall trade-off ($H_o\uparrow$) across different scales. Qualitative results are provided in supplementary material, Sec. H.

\noindent\textbf{Cross-Model Generalization.}
Fig.~\ref{fig:cross_model} presents GRACE across multiple diffusion architectures, ranging from SD v2.1 to SDXL and FLUX.1. GRACE consistently removes target concepts while preserving generation fidelity, indicating that its subspace-constrained intervention is not tied to a specific backbone and can be transferred across heterogeneous generative models. Additional qualitative results are provided in supplementary material, Sec. H.

\noindent\textbf{Superclass Preservation.}
Table~\ref{tab:superclass_preservation} evaluates target IP erasure and preservation of their parent categories. GRACE achieves effective target removal with substantially lower Parent FID than the baselines. For example, after erasing Snoopy, GRACE retains the broader ``Dog'' category and continues to generate natural dog images. Similar results for other IP concepts show that GRACE localizes its intervention without suppressing the surrounding semantic superclass.

\begin{table}[t]
\centering
\caption{Target erasure and superclass preservation. Lower Target CS indicates stronger erasure, while lower Parent FID indicates better superclass preservation.}
\label{tab:superclass_preservation}
\resizebox{\linewidth}{!}{
\begin{tabular}{l|ccc}
\toprule
\multirow{2}{*}{\textbf{Method}}
& \textbf{Snoopy / Dog}
& \textbf{Winnie / Bear}
& \textbf{H. Kitty / Cat} \\
& \textbf{Target CS / Parent FID} $\downarrow$
& \textbf{Target CS / Parent FID} $\downarrow$
& \textbf{Target CS / Parent FID} $\downarrow$ \\
\midrule
ESD
& 20.92 / 35.20
& 22.46 / 32.10
& 21.83 / 34.50 \\
Speed
& 19.34 / 28.50
& 19.35 / 27.40
& 18.64 / 26.90 \\
SPM
& 18.25 / 26.40
& 18.87 / 25.80
& 19.21 / 24.50 \\
Co-Erasing
& 18.29 / 58.60
& 17.63 / 62.30
& 18.31 / 55.40 \\
MACE
& 19.67 / 75.40
& 18.18 / 81.20
& 17.32 / 78.50 \\
GRACE (ours)
& 19.02 / \textbf{24.50}
& 18.23 / \textbf{22.10}
& 18.35 / \textbf{23.90} \\
\bottomrule
\end{tabular}
}
\end{table}

\begin{table}[!t]
\caption{\textbf{Time consumption} of the erasing pipeline for \(c\) targeted concepts on \(n\) DMs, with each generating images based on \(p\) prompts. The total time is calculated under the specific setting of \(c=20\), \(n=5\), and \(p=60\).}
\centering
% 1. 增加行间距，让上下更宽松（数值可根据喜好微调，如 1.2, 1.4）
% 2. 增加列间距，让左右更宽松（数值可微调，如 10pt, 14pt）
\setlength{\tabcolsep}{3pt} 
\begin{tabular}{c|cccc}
\toprule
 & Prep.(h) & Model FT (h) & Gen.(S) & Total(h) \\
\midrule
ESD & \(0.02c\) & \(0.7cn\) & \(1.85cpn\) & 73.48 \\
SPEED & 0 & 0 & \(2.50cpn\) & 4.17 \\
SPM & \(0.10c\) & \(0.3cn\) & \(1.95cpn\) & 35.25 \\
Co-Erasing & \(0.25c\) & \(0.25cn\) & \(1.92cpn\) & 33.20 \\
MACE & \(0.12c\) & \(0.14cn\) & \(1.95cpn\) & 19.65 \\
\rowcolor{blue!10}
GRACE(ours) & \(0.08c\) & \(0.11cn\) & \(1.95cpn\) & 15.85 \\
\bottomrule
\end{tabular}
\label{tab:time_consumption}
\end{table}

\begin{figure*}[t]
  \centering  
  \includegraphics[width=0.8\linewidth]{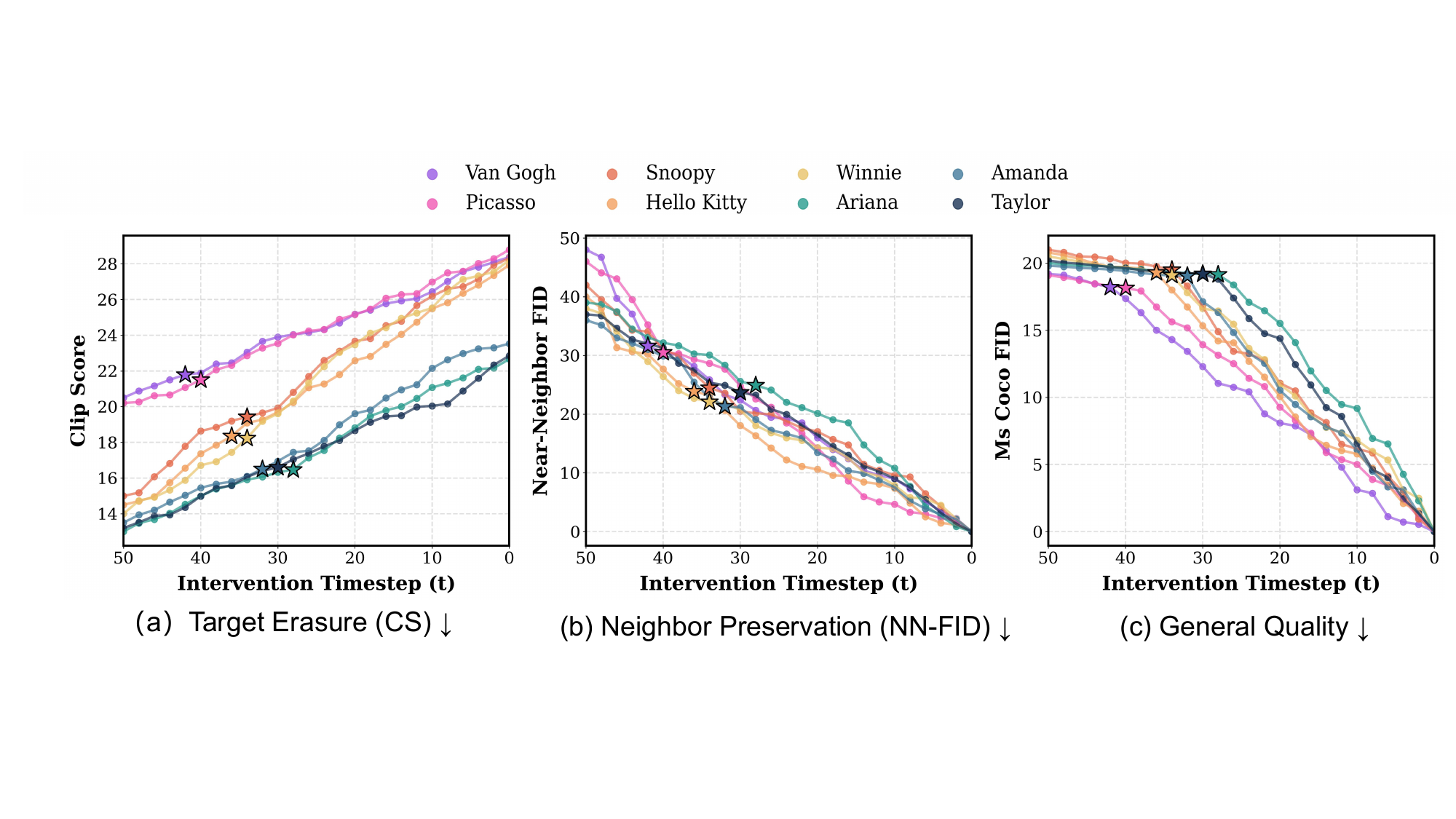}
  \caption{Impact of intervention timestep \((t)\) on erasure and quality. We observe a clear trade-off: earlier interventions (larger \((t)\)) yield better target erasure (lower CS) but disrupt neighboring concepts and overall quality (higher NN-FID and FID). Star markers (\(\star\)) indicate the optimal \((t)\) for each concept to achieve a Pareto-optimal balance.}
  \label{fig:timesteps}
\end{figure*}

\noindent\textbf{Computational Efficiency.}
Table~\ref{tab:time_consumption} compares the computational cost of different erasure methods. GRACE requires substantially less optimization time than conventional fine-tuning-based baselines while maintaining strong erasure and preservation performance. Although training-free methods avoid tuning costs, they generally provide weaker erasure reliability or semantic preservation. GRACE therefore offers a practical balance between effectiveness and computational efficiency.

\begin{table*}[!t]
\centering
\caption{Quantitative comparison of multi-concept erasure when scaling to 10, 20, and 50 celebrities. The best results among erasure methods are highlighted in \textbf{bold}.}
\resizebox{0.9\textwidth}{!}{
\begin{tabular}{l|ccccc|ccccc|ccccc}
\toprule
& \multicolumn{5}{c}{Erase 10 Celebrities}& \multicolumn{5}{c}{Erase 20 Celebrities}& \multicolumn{5}{c}{Erase 50 Celebrities}\\
\cmidrule{2-16}
& Acc\(_{e}\)\(\downarrow\) & Acc\(_{r}\)\(\uparrow\) & \(H_{o}\)\(\uparrow\) & CS\(\uparrow\) & FID\(\downarrow\)
& Acc\(_{e}\)\(\downarrow\) & Acc\(_{r}\)\(\uparrow\) & \(H_{o}\)\(\uparrow\) & CS\(\uparrow\) & FID\(\downarrow\)
& Acc\(_{e}\)\(\downarrow\) & Acc\(_{r}\)\(\uparrow\) & \(H_{o}\)\(\uparrow\) & CS\(\uparrow\) & FID\(\downarrow\) \\
\midrule
SD v1.5 
& 89.84 & 90.12 & 18.26 & 26.73 & -
& 90.37 & 90.12 & 17.40 & 26.73 & -
& 90.91 & 90.12 & 16.51 & 26.73 & - \\
\midrule
Co-Erasing 
& 1.67 & 86.92 & 92.27 & 22.38 & 35.74
& 2.08 & 84.31 & 90.61 & 20.26 & 41.88
& 3.24 & 79.42 & 87.24 & 18.08 & 59.61 \\
MACE
& 1.58 & 86.35 & 91.99 & 21.16 & 37.62
& \textbf{1.97} & 83.76 & 90.33 & 19.94 & 43.95
& \textbf{3.11} & 78.57 & 86.77 & 17.67 & 62.34 \\
Speed 
& 1.83 & 87.84 & 92.72 & 22.92 & 29.81
& 2.51 & 85.63 & 91.18 & 20.61 & 35.27
& 4.26 & 81.16 & 87.85 & 19.34 & 43.42 \\
\rowcolor{blue!8}
GRACE(ours)
& \textbf{1.46} & \textbf{88.71} & \textbf{93.37} & \textbf{25.78} & \textbf{22.47}
& 2.16 & \textbf{86.94} & \textbf{92.07} & \textbf{24.11} & \textbf{25.43}
& 3.29 & \textbf{83.02} & \textbf{89.34} & \textbf{22.86} & \textbf{32.14} \\
\bottomrule
\end{tabular}
}
\label{tab:multi_celebrity_erase_large_scale}
\end{table*}

\subsection{Ablation Studies}
\label{sec:ablation}
We conduct ablation studies on nudity erasure to evaluate the proposed components and temporal intervention strategy. Table~\ref{tab:ablation_decoupled} reports the Nudity Rate and non-target preservation performance.

\noindent\textbf{Effectiveness of Core Components.}
Table~\ref{tab:ablation_decoupled} evaluates SA-PCA, $\mathcal{L}_{\mathrm{sub}}$, and Energy Gating. All variants achieve a $0.0\%$ Nudity Rate, so the main differences lie in preservation performance.
Replacing SA-PCA with standard PCA degrades both CS and FID, confirming that semantic weighting better isolates target-relevant directions. Removing $\mathcal{L}_{\mathrm{sub}}$ causes the largest CS drop, showing its role in limiting off-subspace disturbance. Disabling Energy Gating further worsens preservation, indicating that static adapter activation introduces unnecessary intervention. Overall, the full GRACE model achieves the best preservation while maintaining complete target removal.

\begin{table}[t]
\centering
\caption{Ablation studies on the proposed core components for Nudity erasure. ``Std. PCA'' denotes standard variance-based PCA, while ``SA-PCA'' is our Semantic-Aware PCA. \(\mathcal{L}_{\text{sub}}\) is the orthogonal constraint, and Energy Gating represents the Energy-Driven dynamic Gating mechanism.}
\label{tab:ablation_decoupled}
\resizebox{\linewidth}{!}{
\begin{tabular}{lccc|c|cc}
\toprule
\multirow{2}{*}{\textbf{Settings}} & \multicolumn{3}{c|}{\textbf{Components}} & \textbf{Target Concept} & \multicolumn{2}{c}{\textbf{Preserved Concepts}} \\
\cmidrule{2-7}
 & \textbf{Subspace} & \(\mathcal{L}_{\text{sub}}\) & \textbf{Energy Gating} & \textbf{Nudity Rate (\%)} \(\downarrow\) & \textbf{CS} \(\uparrow\) & \textbf{FID} \(\downarrow\) \\
\midrule
w/ Std. PCA & Std. PCA & \checkmark & \checkmark & 0.0 & 22.84 & 25.69\\
w/o \(\mathcal{L}_{\text{sub}}\) & SA-PCA & \(\times\) & \checkmark & 0.0 & 21.30 & 25.44\\
w/o Energy Gating & SA-PCA & \checkmark & \(\times\) & 0.0 & 22.15 & 27.50 \\
\midrule
\rowcolor{blue!10} GRACE(ours) & SA-PCA & \checkmark & \checkmark & \textbf{0.0} & \textbf{25.38}& \textbf{19.27}\\
\bottomrule
\end{tabular}
}
\end{table}

% 导言区只需包含: 
% \usepackage{booktabs}
% \usepackage{multirow}
% \usepackage{amssymb} % 提供 \checkmark
\begin{table}[htbp]
\centering
\caption{Ablation study on adapter placement across U-Net blocks. We evaluate the erasure efficacy using Nude Rate on the target concept, and the preservation ability using CLIP Score (CS) and FID on the COCO dataset.}
\label{tab:ablation_layers}
\resizebox{\linewidth}{!}{
\begin{tabular}{l|ccc|c|c|cc}
\toprule
% 注意这里：将 {c} 改为了 {c|}，补齐了 Params 左侧的竖线
\multirow{2}{*}{\textbf{Model}} & \multicolumn{3}{c|}{\textbf{Adapter Placement}} & \multirow{2}{*}{\textbf{Params}} & \textbf{Target Concept} & \multicolumn{2}{c}{\textbf{Preservation (COCO)}} \\
\cmidrule(lr){2-4} \cmidrule(lr){6-6} \cmidrule(lr){7-8}
& \textbf{Down} & \textbf{Mid} & \textbf{Up} & & \textbf{Nude Rate (↓)} & \textbf{CS (↑)} & \textbf{FID (↓)} \\
\midrule
Original Model & $\times$ & $\times$ & $\times$ & 0 MB& 62.71\%& 26.73& -\\
\midrule
Down Only & \checkmark & $\times$ & $\times$ &  15.6MB& 41.8\%& 26.12& 18.74\\
Mid Only  & $\times$ & \checkmark & $\times$ &  6.25MB& 24.7\%& 26.46& 19.34\\
Up Only   & $\times$ & $\times$ & \checkmark &  23.45MB& 16.8\%& 26.22& 19.39\\
Mid + Up& $\times$ & \checkmark& \checkmark&  29.7MB& 0.0\%& 25.38& 19.27\\
All Layers & \checkmark & \checkmark & \checkmark &   45.35MB& 0.0\%& 24.87& 20.46\\
\bottomrule
\end{tabular}
}
\end{table}

\noindent\textbf{Impact of Intervention Timestep \((t)\).}
Recalling our earlier observation that concept formation is highly timestep-dependent, we further investigate how the intervention timestep affects the erasure-preservation trade-off. As illustrated in Fig.~\ref{fig:timesteps}, intervening at earlier stages (larger \(t\)) yields stronger target erasure but causes greater disruption to neighboring concepts and overall image structure. Conversely, intervening too late (smaller \(t\)) may fail to sufficiently suppress the target concept. Based on this trade-off, we identify a concept-specific transition point \(t_{\mathrm{start}}\), indicated by the star marker \(\star\), which provides a favorable balance between erasure and preservation. This transition point is then used in the temporal scaling function in Eq.~(18) to dynamically adjust the intervention strength during denoising.

\noindent\textbf{Ablation Study of Regularization Weight $\lambda$.}
We analyze the effect of the subspace regularization weight $\lambda$ across different erasure tasks. The optimal values are $0.2$ for \textit{Van Gogh}, $0.5$ for \textit{Snoopy}, $0.6$ for \textit{Nudity}, and $0.7$ for \textit{Ariana}, indicating concept-dependent sensitivity to the trade-off between erasure and preservation. Detailed analyses of target removal, near-neighbor preservation, and general-content fidelity are provided in supplementary material, Sec. E.

\noindent\textbf{Ablation on Adapter Placement.}
Table~\ref{tab:ablation_layers} compares adapter placements across U-Net blocks. Among single-block settings, the Up blocks achieve the lowest nude rate (16.8\%), whereas the Down blocks perform worst (41.8\%). The Mid + Up configuration further reduces the nude rate to 0.0\% and outperforms the All Layers setting. By excluding the Down blocks, it saves 15.6 MB of parameters while improving COCO preservation, with a higher CLIP Score (25.38 vs.\ 24.87) and lower FID (19.27 vs.\ 20.46). These results identify Mid + Up as the best balance between complete erasure, preservation quality, and parameter efficiency.

\section{Conclusion}
\label{sec:conclusion}
In this paper, we proposed GRACE, a precise framework for concept erasure in text-to-image diffusion models that alleviates the persistent trade-off between target erasure and non-target preservation. By synergistically combining Semantic-Aware PCA (SA-PCA) for sensitive-subspace localization, off-subspace regularization $\mathcal{L}_{\mathrm{sub}}$ to limit semantic disturbance, and an Energy Gating mechanism for adaptive intervention, GRACE concentrates concept modifications within relevant semantic directions and regulates their activation across denoising timesteps. Extensive experiments demonstrate that GRACE effectively suppresses undesirable concepts, including nudity, while achieving a superior balance between erasure effectiveness, image quality, and semantic preservation. Ultimately, GRACE provides a robust and theoretically motivated step toward safer and more controllable generative AI.

\section{Acknowledgments}
This work was supported in part by the Natural Science Foundation of Sichuan Province under Grant 2025ZNSFSC1475.

\bibliographystyle{IEEEtran}
\bibliography{ref}

\clearpage

\appendix

\subsection{Pseudocode of the GRACE Framework}
\label{code}
To provide a clearer and more systematic presentation of the proposed method, we summarize the complete GRACE procedure in this section through three stage-wise pseudocode algorithms. Specifically, Algorithm~\ref{alg:subspace} presents the semantic-aware sensitive subspace localization, Algorithm~\ref{alg:training} describes the subspace-constrained adapter optimization, and Algorithm~\ref{alg:inference} details the energy-driven gated inference.

% =========================================================
% Algorithm 1: Sensitive Subspace Localization
% 

\begin{algorithm}[h]
\caption{Semantic-Aware Sensitive Subspace Localization}
\label{alg:subspace}
\begin{algorithmic}[1]
\Require Frozen diffusion model $\mathcal{M}$, target concept $s$,
sensitive prompts $\mathcal{P}_s$, selected layers $\mathcal{L}$,
and subspace rank $K$
\Ensure Layer-wise sensitive subspaces
$\{V_K^{(l)}\}_{l\in\mathcal{L}}$

\State $e_s\gets\operatorname{Normalize}(E_{\mathrm{text}}(s))$

\For{each prompt $p_i\in\mathcal{P}_s$}
    \State $c_i\gets\operatorname{Normalize}(E_{\mathrm{text}}(p_i))$
    \State $\omega_i\gets\max(0,c_i^\top e_s)$
    \State Run the frozen UNet and collect
    $\{H_i^{(l)}\}_{l\in\mathcal{L}}$
\EndFor

\For{each layer $l\in\mathcal{L}$}
    \State Flatten $H_i^{(l)}$ into
    $\{h_{i,j}^{(l)}\}_{j=1}^{M_l}$
    \State Compute the semantically weighted mean $\mu^{(l)}$
    \State Compute the semantically weighted covariance $\Sigma^{(l)}$
    \State Perform eigendecomposition of $\Sigma^{(l)}$
    \State Retain the top-$K$ eigenvectors as $V_K^{(l)}$
\EndFor

\State \Return $\{V_K^{(l)}\}_{l\in\mathcal{L}}$
\end{algorithmic}
\end{algorithm}

% =========================================================
% Algorithm 2: Adapter Optimization
% =========================================================
\begin{algorithm}[h]
\caption{Subspace-Constrained Adapter Optimization}
\label{alg:training}
\begin{algorithmic}[1]
\Require Frozen diffusion model $\mathcal{M}$, target embedding $e_s$,
sensitive subspaces $\{V_K^{(l)}\}$, training prompts,
regularization weight $\lambda$
\Ensure Trained adapters $\{A_l\}_{l\in\mathcal{L}}$

\State Insert trainable adapters $\{A_l\}_{l\in\mathcal{L}}$
\State Freeze the text encoder, UNet backbone, and VAE decoder

\For{each training iteration}
    \State Sample prompt $c$, timestep
    $t\in[T_{\min},T_{\max}]$, latent $z_0$, and noise $\epsilon$
    \State $z_t\gets
    \sqrt{\bar{\alpha}_t}z_0+
    \sqrt{1-\bar{\alpha}_t}\epsilon$

    \State $e_c\gets\operatorname{Normalize}(E_{\mathrm{text}}(c))$
    \State $\widetilde e_{\mathrm{safe}}
    \gets e_c-(e_c^\top e_s)e_s$
    \State $e_{\mathrm{safe}}\gets
    \widetilde e_{\mathrm{safe}}/
    (\|\widetilde e_{\mathrm{safe}}\|_2+\varepsilon)$

    \State Run the original and adapter-augmented UNets
    \State Decode the estimated clean latent into $\widehat{x}_0$
    \State $v_{\mathrm{pred}}\gets
    \operatorname{Normalize}(E_{\mathrm{img}}(\widehat{x}_0))$
    \State $\mathcal{L}_{\mathrm{CLIP}}
    \gets1-v_{\mathrm{pred}}^\top e_{\mathrm{safe}}$

    \State $\mathcal{L}_{\mathrm{sub}}\gets0$

    \For{each adapter layer $l\in\mathcal{L}$}
        \State $\Delta h_l\gets
        h_l^{\mathrm{edit}}-h_l^{\mathrm{orig}}$
        \State $\Delta h_l^{\perp}\gets
        \left(I-V_K^{(l)}V_K^{(l)\top}\right)\Delta h_l$
        \State $\mathcal{L}_{\mathrm{sub}}\gets
        \mathcal{L}_{\mathrm{sub}}+
        \dfrac{\|\Delta h_l^\perp\|_F}
        {\|\Delta h_l\|_F+\varepsilon}$
    \EndFor

    \State $\mathcal{L}_{\mathrm{sub}}\gets
    \mathcal{L}_{\mathrm{sub}}/|\mathcal{L}|$
    \State $\mathcal{L}_{\mathrm{total}}\gets
    \mathcal{L}_{\mathrm{CLIP}}+
    \lambda\mathcal{L}_{\mathrm{sub}}$
    \State Update only $\{A_l\}_{l\in\mathcal{L}}$
\EndFor

\Return $\{A_l\}_{l\in\mathcal{L}}$
\end{algorithmic}
\end{algorithm}

% =========================================================
% Algorithm 3: Gated Inference
% =========================================================
\begin{algorithm}[h]
\caption{Energy-Driven Dynamic Gated Inference}
\label{alg:inference}
\begin{algorithmic}[1]
\Require Prompt $c$, trained adapters $\{A_l\}$,
sensitive subspaces $\{V_K^{(l)}\}$,
gate parameters $\alpha,\beta,\delta$,$t_{\mathrm{start}}$
\Ensure Generated image with the target concept erased

\State Encode prompt $c$ and initialize $z_T\sim\mathcal{N}(0,I)$

\For{$t=T,T-1,\ldots,1$}
    \For{each adapter layer $l\in\mathcal{L}$}
        \State Obtain the current feature $\bar h_l$
        \State $E_l\gets
        \dfrac{\|V_K^{(l)\top}\bar h_l\|_2}
        {\|\bar h_l\|_2+\varepsilon}$
        \State $\tau_l\gets\sqrt{K/C_l}+\delta$
        \State $\gamma_l\gets
        \sigma\!\left(\alpha(E_l-\tau_l)\right)$
        \State $\eta(t)\gets
        \dfrac{1}
        {1+\exp\!\left(\beta(t-t_{\mathrm{start}})\right)}$
        \State $h_l^{\mathrm{out}}\gets
        h_l+\eta(t)\gamma_l A_l(h_l)$
    \EndFor
    \State Predict the denoising output and update
    $z_t\rightarrow z_{t-1}$
\EndFor

\State Decode $z_0$ using the VAE decoder
\Return Generated image
\end{algorithmic}
\end{algorithm}

\section{Analysis of GRACE}
\label{analysis}
\newtheorem{assumption}{Assumption}[section]
\newtheorem{theorem}{Theorem}[section]

\subsection{Theoretical Analysis of the Activation Threshold \(\tau_l\)}
\label{appendix:threshold_proof}

In this section, we provide a rigorous mathematical justification for the analytical threshold \(\tau_l = \sqrt{K/C_l} + \delta\) used in our spatial energy gating mechanism. We aim to derive the expected normalized projection energy of a benign feature that is completely disentangled from the sensitive concept.

\begin{assumption}[Isotropic Benign Features]
\label{assum:isotropic}
Let \(\bar{h}_l \in \mathbb{R}^{C_l}\) be an intermediate feature vector at layer \(l\). If \(\bar{h}_l\) is benign (i.e., it contains no specific semantic information regarding the target sensitive concept), its direction is uniformly distributed on the unit hypersphere \(\mathbb{S}^{C_l-1}\). For analytical tractability, we model \(\bar{h}_l\) as a random isotropic Gaussian vector, namely \(\bar{h}_l \sim \mathcal{N}(0, I_{C_l})\).
\end{assumption}

Let \(V_K \in \mathbb{R}^{C_l \times K}\) be a deterministic matrix whose columns form an orthonormal basis for the extracted sensitive subspace, satisfying \(V_K^\top V_K = I_K\). The normalized projection energy \(E_l\) is defined as:
\[
E_l = \frac{\|V_K^\top \bar{h}_l\|_2}{\|\bar{h}_l\|_2}.
\]

\begin{theorem}[Expected Energy of Random Projections]
\label{thm:projection_energy}
Under Assumption \ref{assum:isotropic}, the squared normalized projection energy \(E_l^2\) follows a Beta distribution \(\text{Beta}(\frac{K}{2}, \frac{C_l - K}{2})\), with its exact expectation being:
\[
\mathbb{E}[E_l^2] = \frac{K}{C_l}.
\]
Furthermore, in high-dimensional spaces (\(C_l \gg 1\)), \(E_l\) tightly concentrates around \(\sqrt{K/C_l}\).
\end{theorem}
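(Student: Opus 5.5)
The plan is to reduce everything to independent standard Gaussians by rotational invariance. Complete the columns of $V_K$ to an orthonormal basis of $\mathbb{R}^{C_l}$, i.e. choose $U\in\mathbb{R}^{C_l\times(C_l-K)}$ with $Q=[V_K,\,U]$ orthogonal. Under Assumption~\ref{assum:isotropic}, $g:=Q^\top\bar h_l\sim\mathcal N(0,I_{C_l})$ because $Q$ is deterministic and orthogonal. Write $g=(g_1,g_2)$ with $g_1=V_K^\top\bar h_l\in\mathbb R^K$ and $g_2=U^\top\bar h_l\in\mathbb R^{C_l-K}$. These blocks are independent, and $\|\bar h_l\|_2^2=\|g\|_2^2$ by orthogonality. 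Hence
\[
E_l^2=\frac{\|g_1\|_2^2}{\|g_1\|_2^2+\|g_2\|_2^2}=\frac{X}{X+Y},\qquad X\sim\chi^2_K,\ Y\sim\chi^2_{C_l-K},\ X\perp Y .
\]
The event $\bar h_l=0$ has probability zero, so $E_l$ is well defined almost surely; the $\epsilon$ in Eq.~\eqref{eq:energy} is dropped, as in the definition preceding the theorem.

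For the distributional claim I would use the standard Gamma--Beta fact. Since $X\sim\mathrm{Gamma}(K/2,2)$ and $Y\sim\mathrm{Gamma}((C_l-K)/2,2)$ are independent with a common scale, $X/(X+Y)\sim\mathrm{Beta}(K/2,(C_l-K)/2)$. If a self-contained argument is wanted, change variables $(X,Y)\mapsto(S,B)=(X+Y,\,X/(X+Y))$ with Jacobian $S$. The joint density then factorizes into a $\mathrm{Gamma}(C_l/2,2)$ density in $S$ times a Beta density in $B$, which also shows $S\perp B$. The mean of $\mathrm{Beta}(a,b)$ is $a/(a+b)$, giving $\mathbb E[E_l^2]=K/C_l$. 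There is also a one-line alternative: by exchangeability of the coordinates of $g/\|g\|$, each $\mathbb E[g_i^2/\|g\|^2]=1/C_l$, and summing $K$ of them gives the same result. I would include this as a sanity check. This step needs $1\le K<C_l$ so that both Beta parameters are positive; I would state that explicitly.

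For concentration I would compute the variance from the Beta formula:
\[
\mathrm{Var}(E_l^2)=\frac{ab}{(a+b)^2(a+b+1)}=\frac{2K(C_l-K)}{C_l^2(C_l+2)}\le\frac{2K}{C_l^2}.
\]
So the standard deviation of $E_l^2$ is $O(\sqrt K/C_l)$, which is small relative to its mean $K/C_l$ once $K\gg1$, and small in absolute terms as $C_l\to\infty$. To pass from $E_l^2$ to $E_l$, use $|E_l-\sqrt{K/C_l}|=|E_l^2-K/C_l|/(E_l+\sqrt{K/C_l})\le|E_l^2-K/C_l|\sqrt{C_l/K}$. Chebyshev then gives $\Pr(|E_l-\sqrt{K/C_l}|>t)\le 2/(C_lt^2)$, which tends to $0$ for $C_l\gg1$. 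For an exponential bound I would instead invoke Laurent--Massart chi-square tail bounds on $X$ and $Y$ separately and combine them. Alternatively, one can use Lipschitz concentration on the sphere, since $u\mapsto\|V_K^\top u\|$ is $1$-Lipschitz on $\mathbb S^{C_l-1}$; this gives fluctuations of order $1/\sqrt{C_l}$ around its median, and Jensen shows the mean is $\le\sqrt{K/C_l}$ with a gap that is $O(1/\sqrt{C_l})$.

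The main obstacle is interpreting the concentration statement precisely, not the Beta identity. The sphere-concentration fluctuation $O(1/\sqrt{C_l})$ is comparable to $\sqrt{K/C_l}$ itself when $K$ is small. So "tight" concentration should be stated either in absolute terms, with deviations $O(1/\sqrt{C_l})\to0$, or in relative terms under $K\gg1$. I would state both versions, which also justifies the additive margin $\delta$ in Eq.~\eqref{eq:tau}.
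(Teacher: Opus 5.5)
Your proposal is correct. Its distributional part is the same as the paper's proof. Both rotate so that $V_K^\top\bar h_l$ and the complementary block are independent standard Gaussians, and both write $E_l^2=X/(X+Y)$ with independent $\chi^2_K$ and $\chi^2_{C_l-K}$. Both then invoke the Gamma--Beta relation: the paper cites it as standard, while you derive it by the $(S,B)$ change of variables and cross-check the mean by exchangeability.

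The genuine difference is in the concentration claim. The paper notes $\mathbb{E}[E_l]\le\sqrt{K/C_l}$ by Jensen, then appeals to L\'evy's lemma only qualitatively and declares the variance negligible. You instead compute $\mathrm{Var}(E_l^2)=2K(C_l-K)/\bigl(C_l^2(C_l+2)\bigr)$ exactly. You transfer it to $E_l$ through $|E_l-\sqrt{K/C_l}|\le|E_l^2-K/C_l|\sqrt{C_l/K}$. This gives the explicit bound $\Pr\bigl(|E_l-\sqrt{K/C_l}|>t\bigr)\le 2/(C_lt^2)$ and a quantitative Jensen gap.

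The paper's route is shorter, but yours can be checked and it identifies the correct scales. Deviations are $O(1/\sqrt{C_l})$ in absolute terms but $O(1/\sqrt{K})$ relative to $\sqrt{K/C_l}$. Your caveat therefore genuinely sharpens the statement. For the ranks $K\in[6,12]$ in the paper's own rank table, the relative fluctuation of $E_l$ is roughly $1/\sqrt{2K}$, about 20--30\%. Its absolute standard deviation, roughly $1/\sqrt{2C_l}$, is about $0.02$--$0.03$ for $C_l\in\{640,1280\}$. That is the same order as the margin $\delta=0.03$. So ``tight concentration'' holds only in the absolute sense as $C_l\to\infty$.

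Your approach also keeps the probabilistic statement about $E_l$ separate from the deterministic statement about $\mathbb{E}[E_l]$. This avoids the paper's phrasing that $\mathbb{E}[E_l]\approx\sqrt{K/C_l}$ ``holds with high probability.''
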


\begin{proof}
Let \(x = \bar{h}_l \sim \mathcal{N}(0, I_{C_l})\). Since the multivariate standard normal distribution is rotationally invariant, we can apply an orthogonal transformation \(Q \in \mathbb{R}^{C_l \times C_l}\) such that the first \(K\) rows of \(Q\) span the same subspace as \(V_K\). 

Let \(u = V_K^\top x \in \mathbb{R}^K\). By the properties of Gaussian vectors under orthogonal projection, \(u \sim \mathcal{N}(0, I_K)\). Let \(U = \|u\|_2^2\). It follows that \(U\) is a sum of \(K\) independent standard normal variables squared, hence \(U \sim \chi^2(K)\).

Similarly, let \(v\) be the projection of \(x\) onto the orthogonal complement of \(V_K\), such that \(v \in \mathbb{R}^{C_l - K}\) and \(v \sim \mathcal{N}(0, I_{C_l - K})\). Let \(V = \|v\|_2^2\), which yields \(V \sim \chi^2(C_l - K)\). 

By the orthogonality of the subspaces, \(U\) and \(V\) are independent random variables. The total squared norm of \(x\) can be decomposed as \(\|x\|_2^2 = U + V\). Thus, the squared normalized energy can be written as:
\[
E_l^2 = \frac{\|V_K^\top x\|_2^2}{\|x\|_2^2} = \frac{U}{U + V}.
\]
According to standard statistical theory, the ratio of a Chi-squared variable to the sum of two independent Chi-squared variables follows a Beta distribution. Therefore:
\[
E_l^2 \sim \text{Beta}\left(\frac{K}{2}, \frac{C_l - K}{2}\right).
\]
The expectation of a Beta-distributed random variable \(Y \sim \text{Beta}(\alpha, \beta)\) is given by \(\frac{\alpha}{\alpha + \beta}\). Applying this yields:
\[
\mathbb{E}[E_l^2] = \frac{K/2}{K/2 + (C_l - K)/2} = \frac{K}{C_l}.
\]
By Jensen's inequality, since the square root function is strictly concave, we have \(\mathbb{E}[E_l] \le \sqrt{\mathbb{E}[E_l^2]} = \sqrt{K/C_l}\). In modern diffusion models, the channel dimension \(C_l\) is typically very large (e.g., \(C_l \in \{320, 640, 1280\}\)). According to the concentration of measure phenomenon on the high-dimensional hypersphere (Levy's Lemma), the random variable \(E_l\) concentrates extremely tightly around its mean, making the variance negligible. Thus, the approximation \(\mathbb{E}[E_l] \approx \sqrt{K/C_l}\) holds with high probability.
\end{proof}

\begin{figure*}[h]
    \centering
    \includegraphics[width=0.9\linewidth]{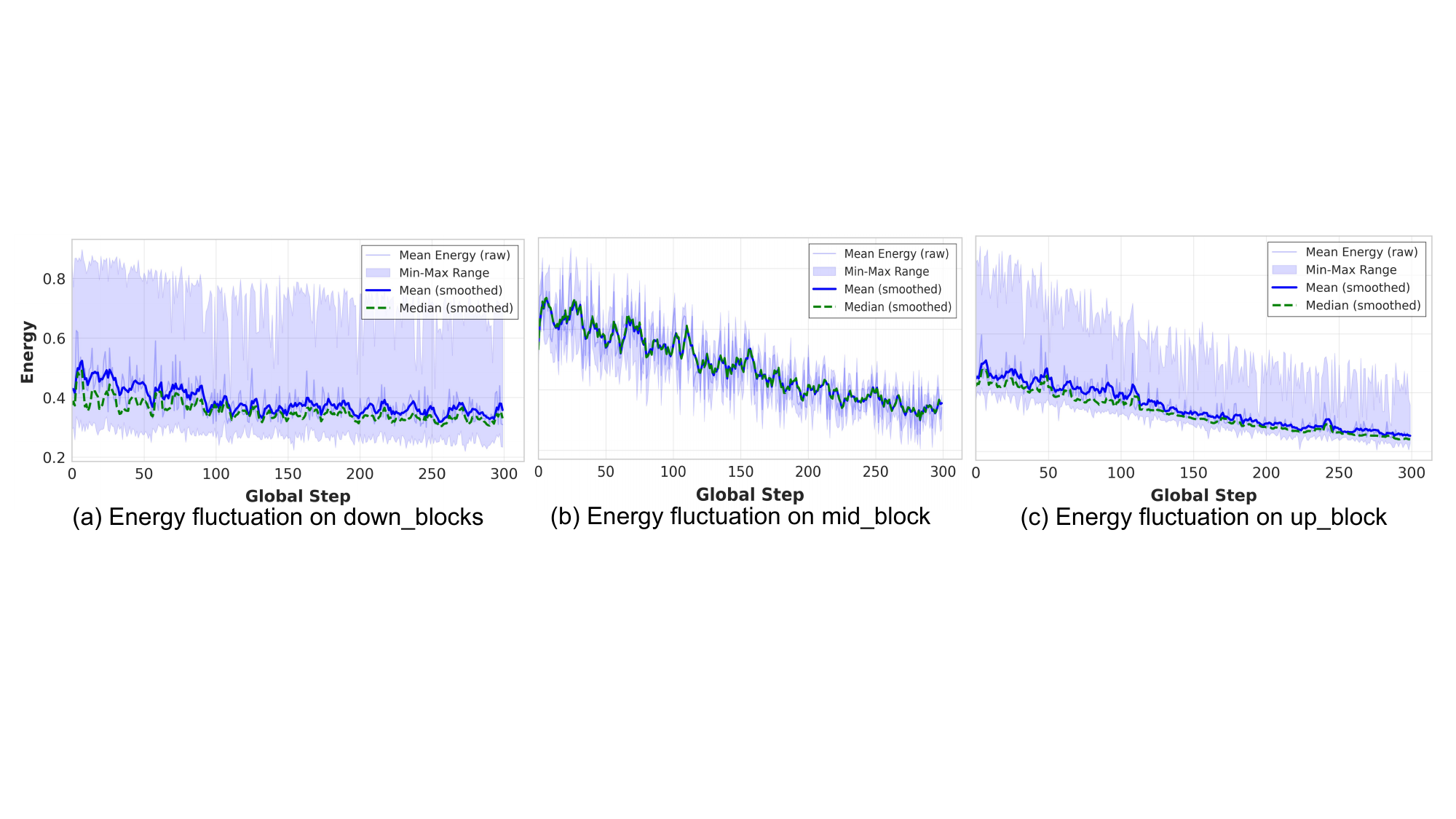}
    \caption{\textbf{Evolution of projection energy across different U-Net blocks during training.} We visualize the energy fluctuations within the down-sampling, middle, and up-sampling layers. The solid lines represent the smoothed mean energy, while the shaded regions indicate the min-max ranges. As training progresses, we observe a consistent decay in the projection magnitude across all layers. This declining trend demonstrates that the internal feature representations are progressively adapting and shifting, effectively reflecting the ongoing modification and erasure of the target concept.}
    \label{fig:energy_fluctuation}
\end{figure*}

\begin{remark}[Justification for the Activation Threshold]
Theorem \ref{thm:projection_energy} rigorously establishes that \(\sqrt{K/C_l}\) is the theoretical noise floor for the projection of any concept-agnostic feature. If a feature contains sensitive semantics, its projection energy \(E_l\) will significantly exceed this floor due to semantic alignment with \(V_K\). To account for the slight non-Gaussianity of real-world neural network features, we introduce a small margin \(\delta > 0\). Consequently, the activation threshold is robustly defined as \(\tau_l = \sqrt{K/C_l} + \delta\), providing a strictly data-free and mathematically grounded decision boundary for spatial energy gating.
\end{remark}

\subsection{Evolution of Projection Energy}
\label{energy}
To further analyze the optimization dynamics of GRACE, we track the projection energy across the down-sampling, middle, and up-sampling blocks throughout training, as shown in Fig.~\ref{fig:energy_fluctuation}. The projection energy measures the normalized response of intermediate features within the sensitive subspace. It exhibits an overall decreasing trend across the three types of blocks, indicating that target-aligned feature responses are progressively weakened during optimization.

\subsection{Empirical Evidence of Low-Rank Semantic Subspaces}
\label{sec:appendix_low_rank}

To examine the compactness of concept-related representations, we
report the layer-wise subspace ranks for the representative concept
\textit{``Ariana Grande''}. For each selected cross-attention layer,
SA-PCA chooses the smallest rank $K$ that retains more than $99\%$
of the semantically weighted variance.

\begin{table*}[!t]
\centering
\caption{Layer-wise subspace-rank statistics for the concept
``Ariana Grande.'' More than $99\%$ of the semantically weighted
variance is retained using only 6-12 principal directions, each
occupying less than $2\%$ of the corresponding channel dimension.}
\label{tab:low_rank_stats}
\resizebox{0.9\textwidth}{!}{%
\begin{tabular}{lcccc}
\toprule
\textbf{Cross-Attention Layer}
&
\textbf{Channel Dimension ($C_l$)}
&
\textbf{Retained Variance}
&
\textbf{Rank ($K$)}
&
\textbf{Capacity Ratio}
\\
\midrule
\texttt{mid\_block.attentions.0...attn2}
& 1280 & 0.9906 & 8  & 0.62\% \\
\texttt{up\_blocks.1.attentions.0...attn2}
& 1280 & 0.9903 & 10 & 0.78\% \\
\texttt{up\_blocks.1.attentions.1...attn2}
& 1280 & 0.9910 & 7  & 0.55\% \\
\texttt{up\_blocks.1.attentions.2...attn2}
& 1280 & 0.9933 & 6  & 0.47\% \\
\texttt{up\_blocks.2.attentions.0...attn2}
& 640  & 0.9911 & 12 & 1.88\% \\
\texttt{up\_blocks.2.attentions.1...attn2}
& 640  & 0.9911 & 10 & 1.56\% \\
\texttt{up\_blocks.2.attentions.2...attn2}
& 640  & 0.9911 & 9  & 1.41\% \\
\midrule
\textbf{Average / Total}
&
--
&
\textbf{$>0.99$}
&
\textbf{Avg: 8.86 (Total: 62)}
&
\textbf{Avg: 1.04\%}
\\
\bottomrule
\end{tabular}%
}
\end{table*}

As shown in Table~\ref{tab:low_rank_stats}, the retained rank ranges
from 6 to 12, with an average of 8.86, and occupies less than $2\%$
of the channel dimension in every examined layer. This result
indicates that the dominant semantically weighted variations of the
examined concept can be represented by a compact layer-wise
subspace.

The $99\%$ cumulative-variance criterion provides an adaptive rank
selection rule based on the low-rank approximation property of PCA,
without requiring a manually specified rank for each layer. The
resulting compact subspaces support localized concept intervention
and provide a restricted representation space for incremental
updates.

\begin{figure*}[t]
\centering
\includegraphics[width=0.85\linewidth]{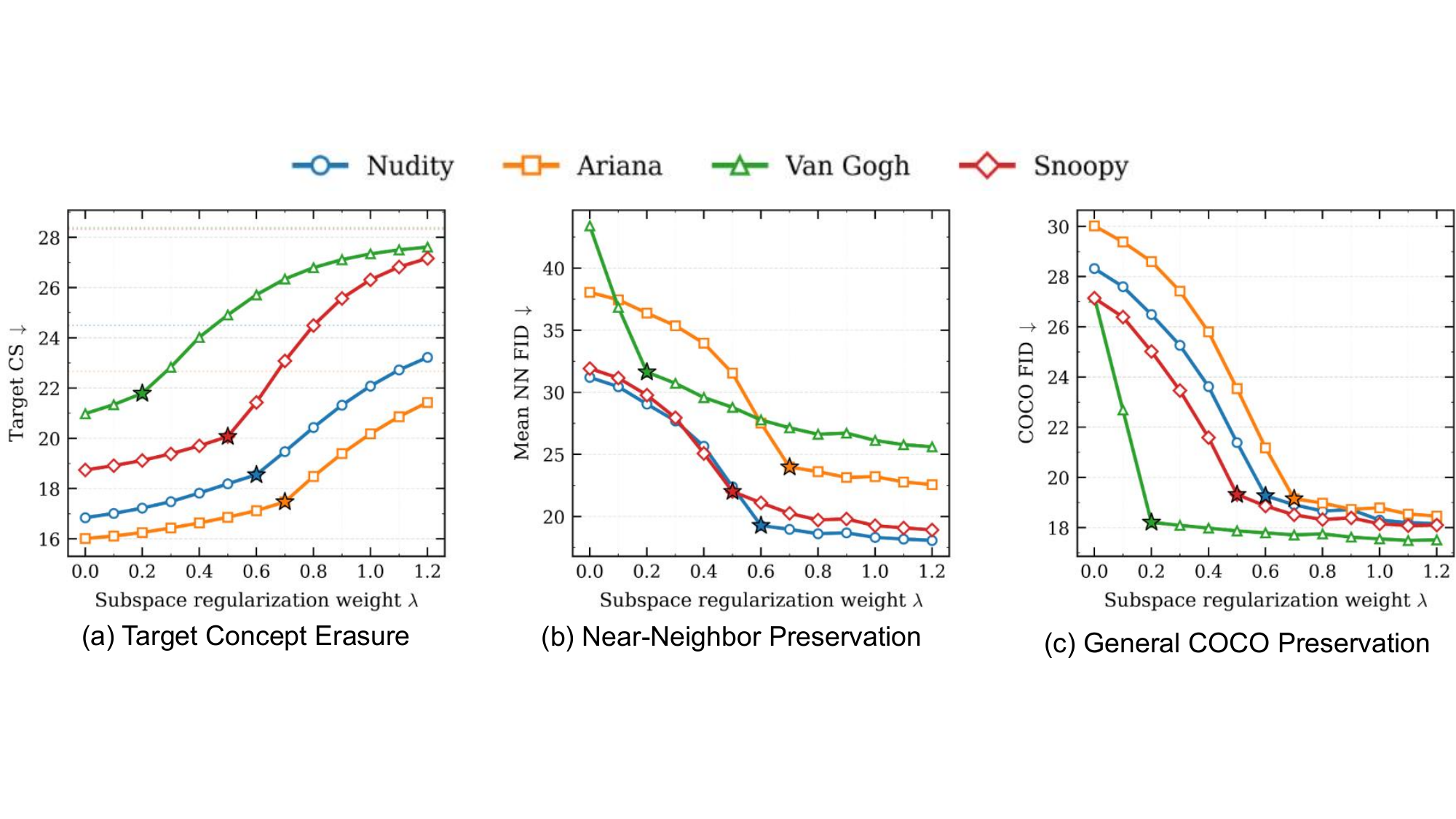}
\caption{Ablation analysis of the regularization weight $\lambda$ under three evaluation settings: target concept erasure, near-neighbor preservation, and general COCO preservation. For target concept erasure, we report both Target CS and Target FID to characterize the erasure-preservation trade-off. For near-neighbor and COCO preservation, we report FID to measure distributional fidelity under different $\lambda$ values. Star markers indicate the selected concept-specific $\lambda$ that achieves the best trade-off.}
\label{fig:lambda_ablation}
\end{figure*}

\subsection{Ablation Study of Regularization Weight $\lambda$}
\label{sec:appendix_lambda}

We further evaluate the effect of the subspace regularization
weight $\lambda$ using Target CS, Mean NN FID, and COCO FID.
As shown in Fig.~\ref{fig:lambda_ablation}, increasing
$\lambda$ gradually weakens target erasure while improving
near-neighbor and general-content preservation, with the
preservation gains eventually saturating. Based on this
trade-off, we set $\lambda=0.2$ for artistic styles,
$\lambda=0.5$ for IP characters, $\lambda=0.6$ for nudity,
and $\lambda=0.7$ for celebrity identities.

\subsection{Sensitivity Analysis of the Spatial-Gating Sharpness $\alpha$}
\label{sec:appendix_alpha}

The energy-gating margin is fixed to $\delta=0.03$ for all
layers and target concepts. We further evaluate the sigmoid
sharpness parameter under
$\alpha\in\{20,40,60,80\}$. A smaller $\alpha$ produces a
smoother transition around the layer-specific threshold
$\tau_l$, but may insufficiently distinguish weak and strong
target-related responses. In contrast, an excessively large
$\alpha$ approximates a hard gate and may introduce abrupt
changes in adapter activation. Our results show that
$\alpha=40$ achieves the best trade-off between smooth
activation and gating selectivity. We therefore use
$\alpha=40$ in all experiments.

\subsection{Analysis of Temporal Scaling Sharpness}
\label{sec:appendix_beta}

We further analyze the temporal scaling sharpness controlled by $\beta$. For the sigmoid schedule, the output increases from approximately $0.05$ to $0.95$ when its input varies from $-\ln 19$ to $\ln 19$, corresponding to a transition width of
\begin{equation}
\Delta t
\approx
\frac{2\ln 19}{\beta}
\approx
\frac{5.89}{\beta}.
\end{equation}
With $\beta=1.0$, the transition therefore spans approximately six denoising steps under the 50-step sampling schedule, accounting for about $12\%$ of the complete trajectory. This setting provides a sufficiently smooth activation process to avoid abrupt adapter intervention while retaining clear temporal selectivity.

\begin{table*}[htbp]
    \centering
    \caption{Quantitative comparison of erasing artistic styles (Van Gogh and Picasso). We report the CLIP Score (CS) and FID for the erased concepts, preserved neighboring concepts (Monet and Rembrandt), and general generation quality.}
    \label{tab:style_erasure}
    \small
      \footnotesize
      \resizebox{0.7\textwidth}{!}{%
    \begin{tabular}{lcccccccccc}
        \toprule
        \multirow{2}{*}{Method} & \multicolumn{2}{c}{Van Gogh} & \multicolumn{2}{c}{Picasso} & \multicolumn{2}{c}{Monet} & \multicolumn{2}{c}{Rembrandt} & \multicolumn{2}{c}{General} \\
        \cmidrule(lr){2-3} \cmidrule(lr){4-5} \cmidrule(lr){6-7} \cmidrule(lr){8-9} \cmidrule(lr){10-11}
        & CS $\downarrow$ & FID $\uparrow$ & CS $\downarrow$ & FID $\uparrow$ & CS $\uparrow$ & FID $\downarrow$ & CS $\uparrow$ & FID $\downarrow$ & CS $\uparrow$ & FID $\downarrow$ \\
        \midrule
        \multicolumn{11}{l}{\textbf{\textit{Task 1: Erasing Van Gogh}}} \\
        \midrule
        SD v1.5 & 28.38 & \textcolor{gray}{-} & \textcolor{gray}{28.79} & - & \textcolor{gray}{28.52} & - & \textcolor{gray}{28.75} & - & 26.73 & - \\
        ESD & 21.88 & \textcolor{gray}{195.76} & \textcolor{gray}{26.16} & 94.88 & \textcolor{gray}{25.54} & 94.43 & \textcolor{gray}{26.72} & 90.54 & 26.43 & 21.96 \\
        Speed & 23.20 & \textcolor{gray}{166.40} & \textcolor{gray}{27.61} & 70.49 & \textcolor{gray}{27.93} & 35.03 & \textcolor{gray}{27.34} & 43.59 & 26.32 & 18.55 \\
        SPM & 22.26 & \textcolor{gray}{198.65} & \textcolor{gray}{27.98} & 55.39 & \textcolor{gray}{\textbf{28.48}} & 62.14 & \textcolor{gray}{28.51} & 43.89 & 26.35 & 19.22 \\
        Co-Erasing & \textbf{21.60} & \textcolor{gray}{192.34} & \textcolor{gray}{26.37} & 92.15 & \textcolor{gray}{25.79} & 81.86 & \textcolor{gray}{26.58} & 88.72 & 25.97 & 23.89 \\
        MACE & 22.63 & \textcolor{gray}{178.52} & \textcolor{gray}{26.91} & 85.64 & \textcolor{gray}{27.09} & 98.37 & \textcolor{gray}{27.47} & 79.25 & 25.12 & 27.53 \\
        Ours & 21.79 & \textcolor{gray}{\textbf{199.79}} & \textcolor{gray}{28.42} & \textbf{28.32} & \textcolor{gray}{28.37} & \textbf{30.71} & \textcolor{gray}{\textbf{28.69}} & \textbf{35.85} & \textbf{26.61} & \textbf{18.21} \\
        \midrule
        \multicolumn{11}{l}{\textbf{\textit{Task 2: Erasing Picasso}}} \\
        \midrule
        ESD & \textcolor{gray}{27.55} & 104.43 & 24.15 & \textcolor{gray}{170.59} & \textcolor{gray}{25.29} & 96.48 & \textcolor{gray}{28.00} & 91.24 & 25.64 & 22.62 \\
        Speed & \textcolor{gray}{28.27} & 39.79 & 24.37 & \textcolor{gray}{139.43} & \textcolor{gray}{27.86} & 36.32 & \textcolor{gray}{28.56} & 44.31 & 26.08 & 19.93 \\
        SPM & \textcolor{gray}{\textbf{28.36}} & 45.70 & 21.40 & \textcolor{gray}{\textbf{183.57}} & \textcolor{gray}{\textbf{28.50}} & 42.73 & \textcolor{gray}{\textbf{28.74}} & 41.43 & 26.17 & \textbf{19.24} \\
        Co-Erasing & \textcolor{gray}{27.30} & 91.87 & \textbf{20.61} & \textcolor{gray}{167.24} & \textcolor{gray}{24.91} & 93.91 & \textcolor{gray}{27.85} & 89.68 & 25.33 & 21.55 \\
        MACE & \textcolor{gray}{27.78} & 102.15 & 23.05 & \textcolor{gray}{158.63} & \textcolor{gray}{26.17} & 104.46 & \textcolor{gray}{28.27} & 76.72 & 25.67 & 24.87 \\
        Ours & \textcolor{gray}{27.29} & \textbf{35.12} & 21.25 & \textcolor{gray}{175.35} & \textcolor{gray}{28.46} & \textbf{32.95} & \textcolor{gray}{28.71} & \textbf{34.27} & \textbf{26.35} & 19.29 \\
        \midrule
        \multicolumn{11}{l}{\textbf{\textit{Task 3: Erasing Van Gogh \& Picasso}}} \\
        \midrule
        ESD & 20.45 & \textcolor{gray}{171.26} & 20.68 & \textcolor{gray}{145.27} & \textcolor{gray}{28.09} & 111.45 & \textcolor{gray}{25.79} & 114.73 & 24.84 & 28.12 \\
        Speed & \textbf{20.12} & \textcolor{gray}{180.42} & 21.42 & \textcolor{gray}{\textbf{177.32}} & \textcolor{gray}{28.32} & 48.37 & \textcolor{gray}{27.89} & 46.38 & 25.76 & 23.34 \\
        SPM & 21.39 & \textcolor{gray}{\textbf{201.37}} & 21.72 & \textcolor{gray}{173.46} & \textcolor{gray}{28.33} & 53.28 & \textcolor{gray}{28.63} & 57.04 & 26.02 & 23.27 \\
        Co-Erasing & 20.78 & \textcolor{gray}{195.68} & \textbf{19.74} & \textcolor{gray}{169.93} & \textcolor{gray}{25.61} & 105.17 & \textcolor{gray}{26.97} & 100.34 & 24.69 & 24.01 \\
        MACE & 21.86 & \textcolor{gray}{182.49} & 20.29 & \textcolor{gray}{161.75} & \textcolor{gray}{26.98} & 110.29 & \textcolor{gray}{27.71} & 105.36 & 23.64 & 27.92 \\
        Ours & 21.04 & \textcolor{gray}{182.57} & 21.05 & \textcolor{gray}{175.18} & \textcolor{gray}{\textbf{28.36}} & \textbf{33.41} & \textcolor{gray}{\textbf{28.67}} & \textbf{37.39} & \textbf{26.27} & \textbf{20.75} \\
        \bottomrule
    \end{tabular}
    }
\end{table*}

\begin{figure*}[htbp]
    \centering
    \includegraphics[width=0.7\linewidth]{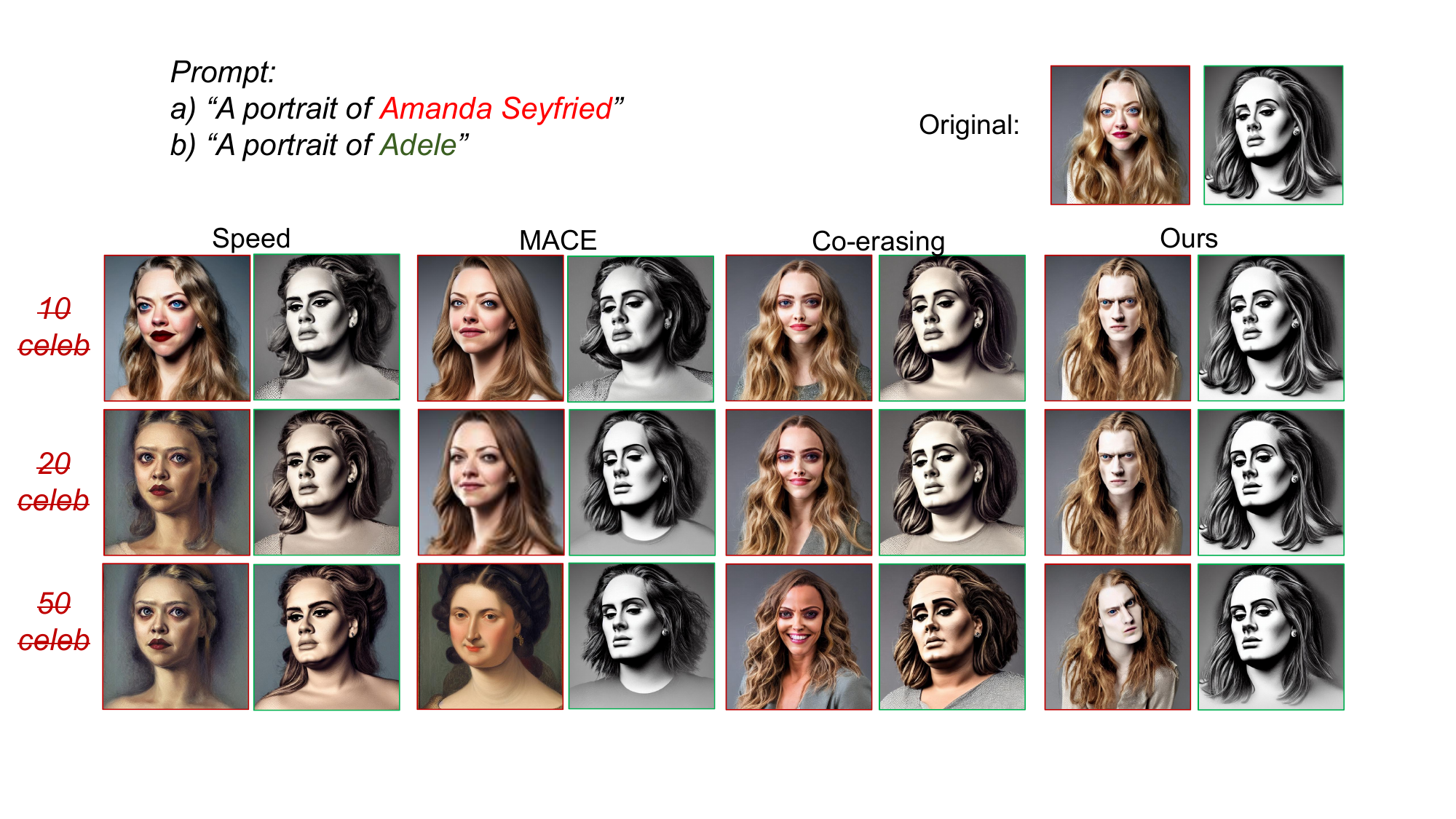}
    \caption{Visual comparison of multi-concept erasure on celebrities. Red and green boxes indicate the erased target concepts and the preserved non-target concepts, respectively. As the number of erased celebrities scales from 10 to 50, our method precisely removes the target identities while exhibiting exceptional preservation performance on non-target concepts, as evidenced by the effective suppression of Amanda Seyfried and the high-quality preservation of Adele.}
    \label{fig:20cele}
\end{figure*}

\begin{figure*}[htbp]
    \centering
    \includegraphics[width=0.7\linewidth]{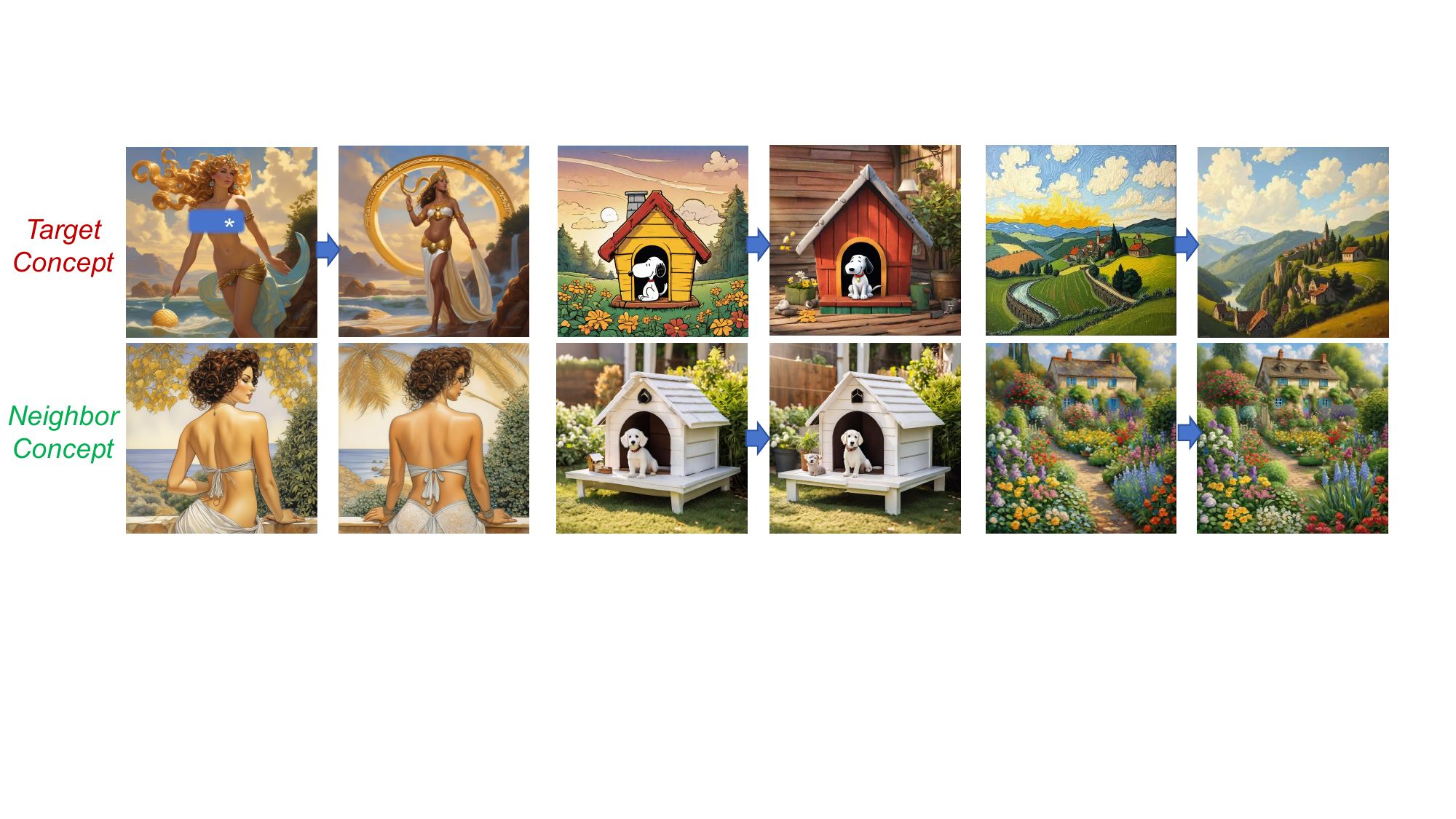}
    \caption{Qualitative visualization of targeted concept erasure in SDXL. We conduct individual erasure for nude content, Van Gogh artistic style and Snoopy character concept. The subsequent image set presents the visual manifestation and implicit concept influence shift of the model, verifying the effectiveness of our concept suppression and erasure strategy.}
    \label{fig:sdxl}
\end{figure*}

\subsection{Extended Qualitative Results}
\label{sec:app_qual}

\paragraph{Artistic Style Erasure and Neighboring Concept Preservation.}
Table~\ref{tab:style_erasure} evaluates single- and multi-style erasure using Target CS ($\downarrow$) and preservation FID ($\downarrow$). GRACE consistently achieves a better balance between target-style removal and the preservation of neighboring styles and general generation quality.

\paragraph{Scalability in Massive Multi-Concept Erasure.}
We evaluate GRACE by increasing the number of erased celebrity identities from 10 to 50. As shown in Fig.~\ref{fig:20cele}, GRACE removes target-specific identity features, such as those of Amanda Seyfried, while preserving non-target identities and visual quality, as illustrated by Adele.

This scalability arises from the energy-gated composition of concept-specific adapters: only adapters whose sensitive subspaces are sufficiently activated contribute during inference, while their residuals are additively combined.

\paragraph{Qualitative Visualization of Cross-Model Concept Erasure.}
Fig.~\ref{fig:sdxl} presents qualitative erasure results on SDXL for nudity, Van Gogh style, and Snoopy. GRACE effectively suppresses the corresponding sensitive content, stylistic characteristics, and character-specific features while maintaining natural generation quality, further demonstrating its applicability across diffusion architectures.

\end{document}